\documentclass{article}

 \usepackage[preprint]{neurips_2026}

\usepackage[utf8]{inputenc} 
\usepackage[T1]{fontenc}    
\usepackage{hyperref}       
\usepackage{url}            
\usepackage{booktabs}       
\usepackage{amsfonts}       
\usepackage{nicefrac}       
\usepackage{microtype}      
\usepackage{xcolor}         
\definecolor{ForestGreen}{RGB}{34,139,34}
\usepackage{amsmath}
\usepackage{amsthm,bbm,bm}
\usepackage{graphicx}
\usepackage{cleveref}
\usepackage{multirow}
\usepackage{subcaption}
\usepackage{wrapfig}
\usepackage{alltt}
\usepackage{algorithm}
\usepackage{algorithmic}

\newtheorem{theorem}{Theorem}

\DeclareMathOperator*{\argmax}{arg\,max}
\DeclareMathOperator*{\argmin}{arg\,min}
\newcommand{\ms}[1]{\tiny{$\pm$#1}}

\title{Post-hoc Selective Classification for Reliable Synthetic Image Detection}

\author{%
  Kaixiang Zheng\thanks{This work was done during an internship at Reality Defender.} \\
  University of Waterloo\\
  \texttt{k56zheng@uwaterloo.ca} \\
  \And
  Jacob H. Seidman \\
  Reality Defender \\
  \texttt{jacob@realitydefender.ai} \\
}

\begin{document}

\maketitle

\begin{abstract}
  As synthetic images become increasingly realistic, reliable synthetic image detection techniques are of pressing need to prevent their misuse. Despite satisfactory in-distribution performance, deep neural network-based synthetic image detectors (SIDs) lack reliability in deployment and often fail in the presence of common covariate shifts, resulting in poor detection accuracy. To avoid the risk caused by potential errors, we adopt a selective classification (SC) strategy by allowing SIDs to abstain from making low confidence predictions. For practicality, we focus on post-hoc methods which perform confidence estimation on a given SID without retraining. However, we show that conventional logit-based confidence score functions (CSFs) exhibit pathological behavior under covariate shifts, leading to SC performance close to or even worse than random guessing. To address this, we propose a simple yet effective SC framework for \textbf{Re}liable \textbf{S}ynthetic \textbf{I}mage \textbf{De}tection (ReSIDe). First, we generalize the notion of logits to an SID's intermediate layers from a centroid matching perspective, extending the use of logit-based CSFs to any layer of an SID. Then, we introduce a preference optimization algorithm that aggregates confidence scores extracted from different layers to a final confidence estimate by minimizing an upper bound of the area under the risk-coverage curve (AURC). Extensive experimental results show that ReSIDe significantly boosts the SC performance of various logit-based CSFs under common covariate shifts, achieving up to 69.55\% AURC reduction.
\end{abstract}

\section{Introduction} \label{sec:intro}

The rapid development of image generation models \citep{brock2018large, nichol2021glide, gu2022vector, rombach2022high, dhariwal2021diffusion, midjourney, wukong, sora2} has made synthetic images almost visually indistinguishable from real ones. To mitigate the risk caused by the misuse of realistic synthetic images, reliable synthetic image detectors (SIDs) are of urgent need. Although modern deep neural network-based (DNN-based) SIDs can be trained to attain near-perfect in-distribution (ID) accuracy, they are often unreliable in deployment upon real-world covariate shifts such as blurring, JPEG compression, low resolution, adversarial attacks, unseen semantic classes, images from unseen generators, etc.

To tackle this challenge and enhance the deployment reliability of SIDs, selective classification (SC) \citep{chow2003optimum, geifman2017selective} provides a principled solution. In an SC framework, a classifier is allowed to abstain from making predictions on samples with low confidence, aiming to increase the accuracy on selected samples. This creates a tradeoff between coverage (the proportion of selected samples) and selective risk (the risk on the selected samples), and an effective SC system should maximize the coverage while minimizing the selective risk. Existing SC methods can be categorized into two streams: methods that require retraining the classifier, and post-hoc SC methods that build upon a fixed classifier without retraining. In this work, we focus on the latter, as it's more cost-effective in practical scenarios. For post-hoc SC, since the classifier is pretrained and fixed, effective SC reduces to designing a confidence score function (CSF) that assigns higher confidence scores to correctly classified samples compared to wrongly classified ones. However, in Sec. \ref{sec:path_sc}, we observe that for synthetic image detection, conventional logit-based CSFs such as the maximum softmax probability (MSP) \citep{geifman2017selective} often become unreliable under common covariate shifts and can give performance at or worse than using a completely uninformed CSF.

To fix this pathological behavior of logit-based CSFs, we propose ReSIDe, a compact SC framework for \textbf{Re}liable \textbf{S}ynthetic \textbf{I}mage \textbf{De}tection with two key components. First, we generalize the notion of logits to an SID's intermediate layers as similarity scores to feature centroids via unsupervised linear probing (ULP). Concretely, for each intermediate layer, we use spherical k-means to compute the centroids of feature vectors from training samples, and use them to form a linear prober to get layerwise logits. Similar to SID's final logits, conventional logit-based CSFs can also be applied to these layerwise logits for confidence estimation, yielding complementary signals to the final layer's logits. Second, we linearly aggregate the confidence scores extracted from all layers, including that from the final logits, with weights optimized by a preference optimization algorithm on a hold-out validation set. We prove that minimizing the preference optimization objective, dubbed the ReSIDe loss, lowers an upper bound to the AURC, thus improving SC performance.

We evaluate ReSIDe on GenImage \citep{zhu2023genimage}, a large-scale SID benchmark, considering various covariate shifts. On both convolution-based and transformer-based SIDs, ReSIDe consistently and substantially improves SC performance of all kinds of logit-based CSFs, reducing AURC by up to 69.55\%. Importantly, ReSIDe is completely plug-and-play, successfully fixing the pathological behavior of final logit-based CSFs with simple linear probing and merely tens of trainable parameters.

The contributions of this work are summarized as follows:
\begin{itemize}
    \item We adopt selective classification to tackle the deployment reliability issue of SIDs.
    \item We identify a pathological behavior of final logit-based confidence estimation in synthetic image detection under common covariate shifts.
    \item We extend the use of logit-based CSFs to SID's intermediate layers based on unsupervised linear probing (ULP).
    \item We propose a preference optimization algorithm to aggregate layerwise confidence scores, provably minimizing an upper bound of the AURC.
    \item We evaluate our unified ReSIDe framework with extensive experiments, demonstrating significant and consistent AURC reductions across DNN architectures, covariate shifts, and CSFs, confirming ReSIDe as a practical post-hoc solution for reliable SID deployment.
\end{itemize}

\begin{figure}[!t]
    \centering
    \includegraphics[width=0.74\linewidth]{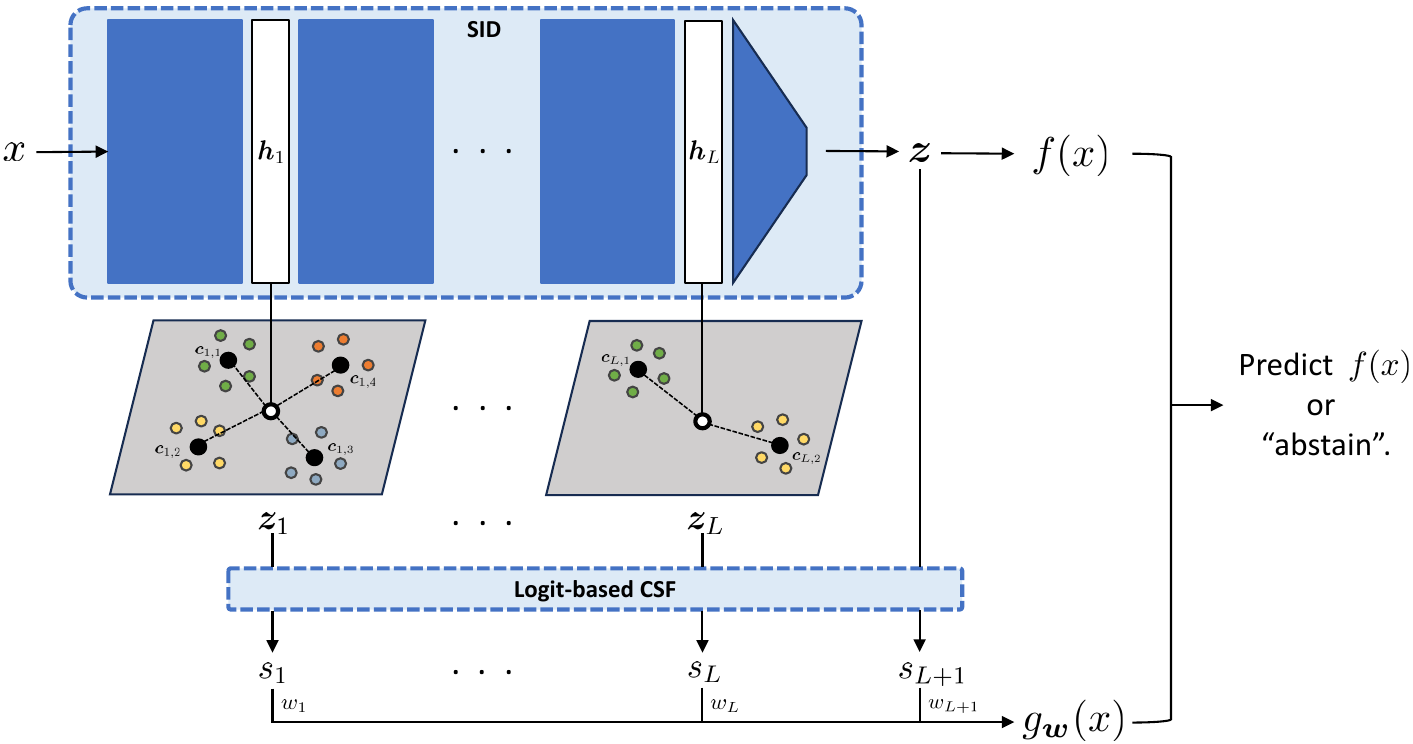}
    \caption{Illustration of the overall ReSIDe framework.}
    \label{fig:pipeline}
    \vspace{-5mm}
\end{figure}

\section{Background and related work} \label{sec:bg_rw}
A comprehensive review of related work on SIDs, SC, and CSFs is provided in Appendix \ref{app:rw}, where we carefully position our work among the relevant literature. In the following, we briefly discuss some background information to facilitate understanding of our proposed method.

\textbf{Logits as centroid matching scores.}
Although logits are typically seen as a linear projection of DNN's penultimate layer features, they have also been interpreted as similarity scores between a feature vector and feature centroids (also known as prototypes or templates). For example, \cite{muller2019does} view each (column) vector in DNN's last linear layer as a class template, and therefore each logit measures the dot product similarity between the penultimate layer feature vector and the corresponding class template. Neural collapse \citep{papyan2020prevalence} further strengthens this view by showing these weight vectors indeed converge to the class means of DNN's penultimate layer features, up to rescaling. In fact, this centroid matching principle is baked in the design of many DNN architectures. For example, in Prototypical Networks \citep{snell2017prototypical}, the centroids are given by the class means of learnable embeddings, and logits are computed by the negative squared Euclidean distance between a query's embedding and each class mean. In CLIP \citep{radford2021learning}, the centroids are given by the normalized text embeddings, and logits are computed by the cosine similarity between the normalized image embedding and each text embedding. In this work, inspired by the centroid matching interpretation of logits, we generalize the notion of logits to intermediate layers of SIDs using centroid-based linear probes, thus extending the use of logit-based CSFs from SID's final output to any layer of interest.

\textbf{Pairwise contrast for preference optimization.} In SC, a desirable CSF should give higher confidence scores to correctly classified samples than misclassified ones. To learn scalar functions with such explicit preference, the following probabilistic approach is commonly adopted. Consider a model $r_{\phi}: \mathbb{R}^d \to \mathbb{R}$ parameterized by $\phi$ that receives a \textit{d}-dimensional feature vector as input and outputs a real number. An input $a$ is preferred over another input $b$, denoted by $a \succ b$, if $r_{\phi}(a)>r_{\phi}(b)$. Modeling the probability of $a \succ b$ by the Bradley–Terry model \citep{bradley1952rank}, i.e., $\text{Pr}\{a \succ b\}=e^{r_{\phi}(a)}/(e^{r_{\phi}(a)}+e^{r_{\phi}(b)})$,
one can train $r_{\phi}$ by minimizing the expected negative log-likelihood (NLL) loss
\begin{small}\begin{align}
\min_{\phi} \mathbb{E}_{a \sim A, b \sim B} \left [-\ln \text{Pr}\{a \succ b\}\right]=\min_{\phi}\mathbb{E}_{a \sim A, b \sim B} \left [\ln(1 + \exp(r_{\phi}(b) - r_{\phi}(a)))\right], \label{eq:po_obj}
\end{align}\end{small}where $A$ and $B$ are the distributions of the preferred and non-preferred patterns. Such trained model tends to return a higher score for a future input if it's more preferable.

This general framework has wide applications in various fields. In information retrieval, RankNet \citep{burges2005learning} follows the framework to rank query-document pairs for building search engines; in large language model (LLM) post-training, reinforcement learning from human feedback (RLHF) \citep{ouyang2022training} and direct preference optimization (DPO) \citep{rafailov2023direct} follow the framework for reward modeling based on human preference; more recently, Mole-PAIR \citep{he2025can} also follows the framework to perform OOD detection for molecular foundation models. In this work, we adopt this framework to aggregate confidence scores extracted from different DNN layers for better SC performance.

\section{Notations and preliminaries}
\textbf{Selective classification.} Consider a classification problem with input space $\mathcal{X}$ and label space $\mathcal{Y} = [C] := \{1, \dots, C\}$. Let $P$ be an unknown data distribution over $\mathcal{X} \times \mathcal{Y}$. The risk of a classifier $f: \mathcal{X} \to \mathcal{Y}$ is defined as $R(f) = \mathbb{E}_{(x,y) \sim P}[l(f(x), y)]$, where $l$ is a loss function. By convention, we set $l$ to the 0/1 loss, $l(f(x), y) = \mathbbm{1}[f(x) \neq y]$, where $\mathbbm{1}[\cdot]$ denotes the indicator function. In this case, the risk is equal to the error rate of the classifier. A selective classifier is defined as a pair $(f,g)$, where $f$ is a classifier and $g:\mathcal{X}\to\mathbb{R}$ is a confidence score function (CSF) that measures the classifier's confidence on its prediction for a given input. Given a threshold $t$, the selective classifier accepts an input $x$ and outputs the corresponding prediction $f(x)$ if $g(x)\ge t$; otherwise, it abstains from making a prediction, i.e.,
\begin{small}\begin{equation}
    (f,g)(x) := \begin{cases} f(x) & \text{if}~~g(x) \geq t, \\
                           \text{``abstain''} & \text{otherwise.}
             \end{cases}
\end{equation}\end{small}The performance of a selective classifier is characterized by its coverage and selective risk. Given $t$, the coverage of a selective classifier is defined as the probability of accepting an input, i.e., $\phi_t(f,g)=\text{Pr}\{g(x)\ge t\}$, and the selective risk is defined as the conditional risk given acceptance, i.e., $R_t(f,g)=\mathbb{E}_P[l(f(x),y)\mid g(x)\ge t]$. For 0/1 loss, the selective risk is equal to the conditional error rate given acceptance, i.e., $R_t(f,g)=\text{Pr}\{f(x) \neq y \mid g(x)\ge t\}$. Note that a classifier's risk is equal to its selective risk at full coverage, i.e., $R(f)=R_t(f,g)$ if $t$ is chosen such that $\phi_t(f,g)=1$. Empirically, given a test set $T=\{(x_i,y_i)\}_{i=1}^N$ drawn i.i.d from $P$, we evaluate the empirical coverage and the empirical selective risk as follows:
\begin{small}\begin{equation}
\hat{\phi}_t(f,g,T)=\frac{1}{N}\sum_{i=1}^N\mathbbm{1}[g(x_i)\ge t],~\hat{R}_t(f,g,T)=\frac{\sum_{i=1}^{N} l(f(x_i),y_i)\mathbbm{1}[g(x_i)\ge t]}{\sum_{i=1}^{N}\mathbbm{1}[g(x_i)\ge t]}.
\end{equation}\end{small}Given a decent CSF, one can typically trade coverage for selective risk by increasing the threshold $t$ and abstaining from more uncertain samples. This tradeoff can be visualized by the risk-coverage (RC) curve, which plots $\hat{R}_t(f,g,T)$ as a function of $\hat{\phi}_t(f,g,T)$. A lower RC curve indicates better SC performance, since for a fixed coverage a smaller selective risk is preferred and for a fixed selective risk a larger coverage is preferred. To summarize the RC curve using a single scalar, an established metric is the \textit{area under the risk-coverage curve} (AURC), denoted by $\text{AURC}(f,g,T)$. Since this paper focuses exclusively on post-hoc SC, we suppress the notational dependence on the fixed classifier $f$. In this case, the quality of the CSF is pivotal to the SC performance, since the RC curve and the resulting $\text{AURC}(g,T)$ are directly determined by $g$, given the test set $T$.

\textbf{Two special cases of CSF and the corresponding SC performance.} First, if a CSF $g^*$ enables perfect ranking of samples in $T$ such that correctly classified samples always get higher confidence scores than those of wrongly classified ones, i.e., $g^*(x^+)>g^*(x^-)$ for any $x^+\in\{x\in T \mid f(x) = y\}$ and $x^-\in\{x\in T \mid f(x) \neq y\}$, then we achieve the optimal SC performance on $T$ with the lowest possible AURC. Second, if a CSF is independent to the classifier's error, e.g., a random CSF, its AURC is equal to the classifier's risk, which can be approximated on $T$ by the empirical error rate $\hat{r}=\frac{1}{N}\sum_{i=1}^N l(f(x_i),y_i)=\frac{1}{N}\sum_{i=1}^N\mathbbm{1}[f(x_i) \neq y_i]$. CSFs used in practice typically yield AURCs between $\text{AURC}(g^*,T)$ and $\hat{r}$. However, if a CSF generally gives higher confidence scores to wrongly classified samples than correctly classified ones, the AURC can be larger than $\hat{r}$.

\textbf{Logit-based CSFs}. DNN-based classifiers generally make predictions by $f(x)=\hat{y}:=\argmax_{c \in \mathcal{Y}} z_c$, where $\bm{z}(x) = [z_1, \dots, z_C] \in \mathbb{R}^C$ is referred to as logits. In this case, the most popular CSFs are logit-based CSFs, which compute confidence scores directly from the logits $\bm{z}$. Since softmax is routinely applied to logits to obtain an estimate of the posterior distribution $[P(y|x)]_{y\in \mathcal{Y}}$, the maximum probability mass output by softmax can be naturally regarded as the confidence on the prediction $\hat{y}$. This yields a logit-based CSF called the maximum softmax probability (MSP) \citep{hendrycks2016baseline, geifman2017selective}. Beyond MSP, other typical logit-based CSFs include softmax margin (SM) \citep{belghazi2021classifiers}, the max logit (ML) \citep{basart2022scaling}, the logits margin (LM) \citep{le1990handwritten, liang2024selective}, the negative entropy (NE) \citep{belghazi2021classifiers}, and the negative Gini index (NGI) \citep{granese2021doctor}. Formally, they are defined as:

\begin{table}[h!]
\vspace{-2mm}
\centering
\resizebox{\textwidth}{!}{
\begin{tabular}{lll}
$\text{MSP}(\bm{z}) = \sigma_{\hat{y}}(\bm{z})$, & $\text{SM}(\bm{z}) = \sigma_{\hat{y}}(\bm{z}) - \max_{c \in \mathcal{Y}:~c \neq \hat{y}} \sigma_c(\bm{z})$, & $\text{ML}(\bm{z}) = z_{\hat{y}}$, \\
$\text{LM}(\bm{z}) = z_{\hat{y}} - \max_{c \in \mathcal{Y}:~c \neq \hat{y}} z_c$, & $\text{NE}(\bm{z}) = \sum_{c \in \mathcal{Y}} \sigma_c(\bm{z}) \ln \sigma_c(\bm{z})$, & $\text{NGI}(\bm{z}) = -1 + \sum_{c \in \mathcal{Y}} \sigma_c(\bm{z})^2$,
\end{tabular}}
\vspace{-3mm}
\end{table}

where $\sigma(\cdot)$ is the softmax operation, and $\sigma_c(\cdot)$ is the \textit{c}-th element of the softmax-processed vector.

In this paper, the classifier $f$ is a DNN-based SID, $C=2$ for the binary SID task, and the data distribution $P$ denotes the training distribution of $f$ with potential covariate shifts.

\section{Empirical evaluation of logit-based CSFs} \label{sec:eval_csf}
In this section, we evaluate the SC performance of standard logit-based CSFs for synthetic image detection with six types of common covariate shifts. We first detail the data and model setup and then demonstrate pathological SC performance with these logit-based CSFs, thus motivating our method.

\subsection{Data and model preparation}
\textbf{Dataset.} GenImage \citep{zhu2023genimage} is a large-scale dataset for synthetic image detection, consisting of over 2.6 million images. Roughly half of the images are real and the other half are synthetic from eight generators, namely BigGAN \citep{brock2018large}, GLIDE \citep{nichol2021glide}, VQDM \citep{gu2022vector}, Stable Diffusion V1.4 \citep{rombach2022high}, Stable Diffusion V1.5 \citep{rombach2022high}, ADM \citep{dhariwal2021diffusion}, Midjourney \citep{midjourney}, and Wukong \citep{wukong}. Note that GenImage is built upon ImageNet-1K \citep{deng2009imagenet} and covers images with the same 1000 semantic classes. We curate the datasets used in our experiments from GenImage to create representative covariate shift scenarios. The training set consists of real images and images generated by ADM and BigGAN, covering 50 semantic classes. To test SID's performance under different distributions, we curate seven test sets. The in-distribution (ID) test set follows the same distribution as the training set, i.e., containing images from the same sources and semantic classes. The remaining six test sets contain various covariate shifts:
(1) \textbf{Blur:} images are processed by Gaussian blurring with $\sigma \sim \text{Uniform}[3,5]$, where $\sigma$ is the standard deviation (std) of the Gaussian kernel;
(2) \textbf{Unseen semantic classes (Unseen Cls.):} images from 950 semantic classes not presented in training;
(3) \textbf{Low resolution (Low Res.):} images are resized while preserving the aspect ratio, such that the length of the shorter edge is uniformly random in $[64,96]$;
(4) \textbf{FGSM:} images are perturbed using the fast gradient sign method (FGSM) \citep{goodfellow2014explaining}, with the distortion budget $\epsilon$ uniformly random in $\{1/255, 2/255, ..., 8/255\}$;
(5) \textbf{JPEG:} images are compressed with JPEG with $q \sim \text{Uniform}[30,70]$, where $q$ is the quality factor of JPEG compression;
(6) \textbf{Unseen generators (Unseen Gen.):} synthetic images are generated by Wukong, Midjourney, Stable Diffusion V1.4, and Stable Diffusion V1.5, which are not presented in training.
Note that to isolate the effect of different covariate shifts and study them independently, each covariate-shifted testing distribution differs from the training distribution in only one aspect, while other aspects stay the same as the training distribution. For example, for the dataset with unseen semantic classes, the synthetic images are generated by ADM and BigGAN; for the dataset with unseen generators, the semantic classes are the same 50 classes as in the training set. 

\textbf{Models.} We train two SIDs using representative DNN architectures, a CNN-based architecture ResNet50 \citep{he2016deep} and a transformer-based architecture Swin-T \citep{liu2021swin}. Both models are trained on the training set using standard supervised learning and kept fixed during all SC evaluations. Note that the test set with FGSM perturbations is generated by attacking the ResNet50-based SID.

Tab.~\ref{tab:sum_mode} reports the testing error rates of the two SIDs under different data distributions. As shown, both models achieve error rates close to zero on ID data but suffer from substantial performance degradation under most covariate shifts, often approaching random guessing level with error rates close to 0.5. The covariate shift based on unseen semantic classes is the only exception, where the accuracy degradation is below 5\% for both models.

\begin{table}[!ht]
\vspace{-3mm}
\centering
\caption{Testing error rates of SIDs under different data distributions.}
\label{tab:sum_mode}
\resizebox{0.7\textwidth}{!}{
\begin{tabular}{cccccccc}
\toprule
& ID & Blur & Unseen Cls. & Low Res. & FGSM & JPEG & Unseen Gen.\\
\midrule
ResNet50 & 0.0242 & 0.4950 & 0.0732 & 0.5000 & 0.4767 & 0.4783 & 0.4850 \\
Swin-T & 0.0150 & 0.4950 & 0.0341 & 0.4875 & 0.3975 & 0.4792 & 0.4669 \\
\bottomrule
\end{tabular}}
\vspace{-3mm}
\end{table}

\subsection{Pathological SC performance} \label{sec:path_sc}
\begin{figure}[!t]
\centering
\begin{subfigure}[b]{0.22\linewidth}  
\centering 
\includegraphics[width=\linewidth]{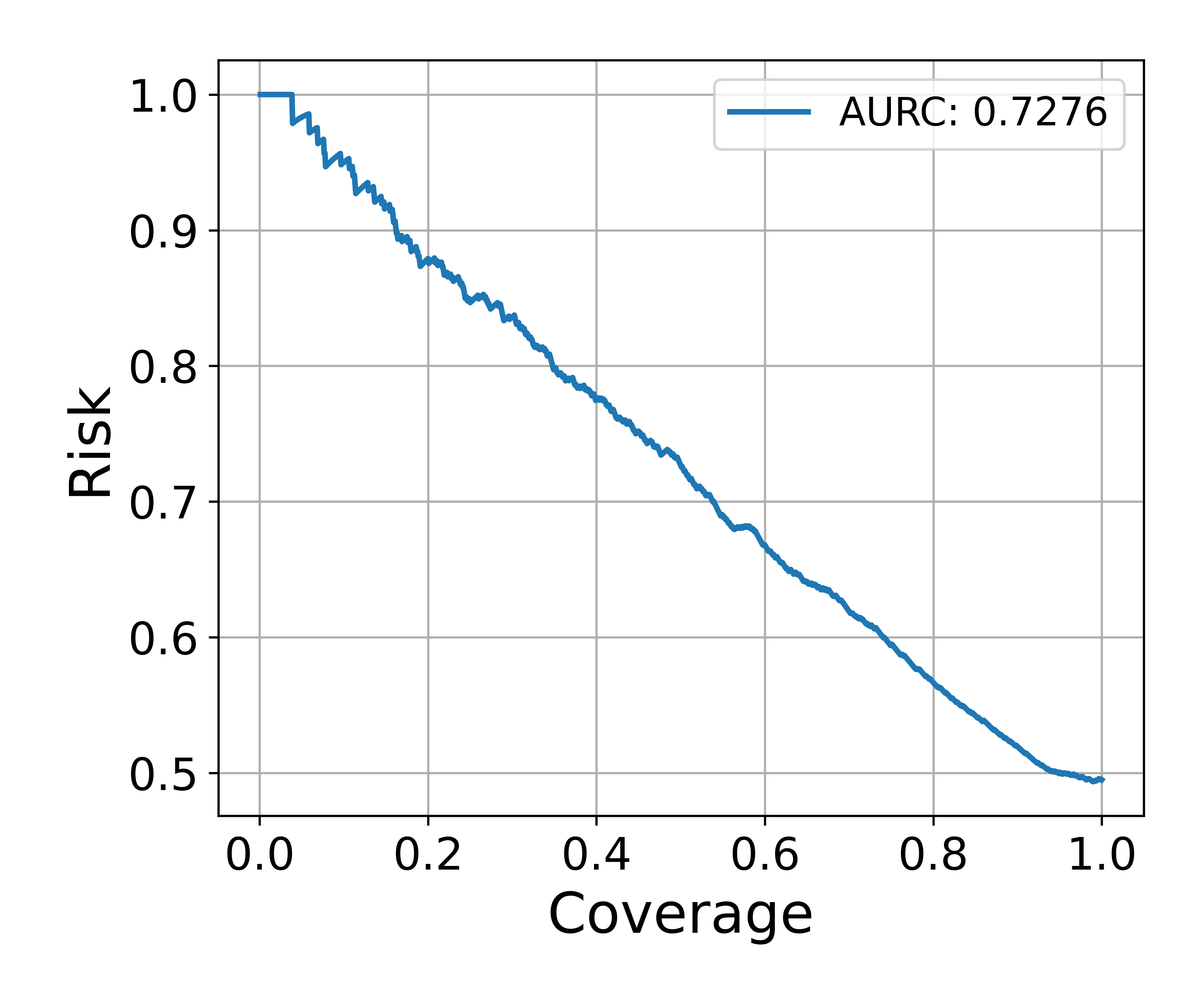}
\caption*{(a) Blur}
\end{subfigure}
\begin{subfigure}[b]{0.22\linewidth}
\centering 
\includegraphics[width=\linewidth]{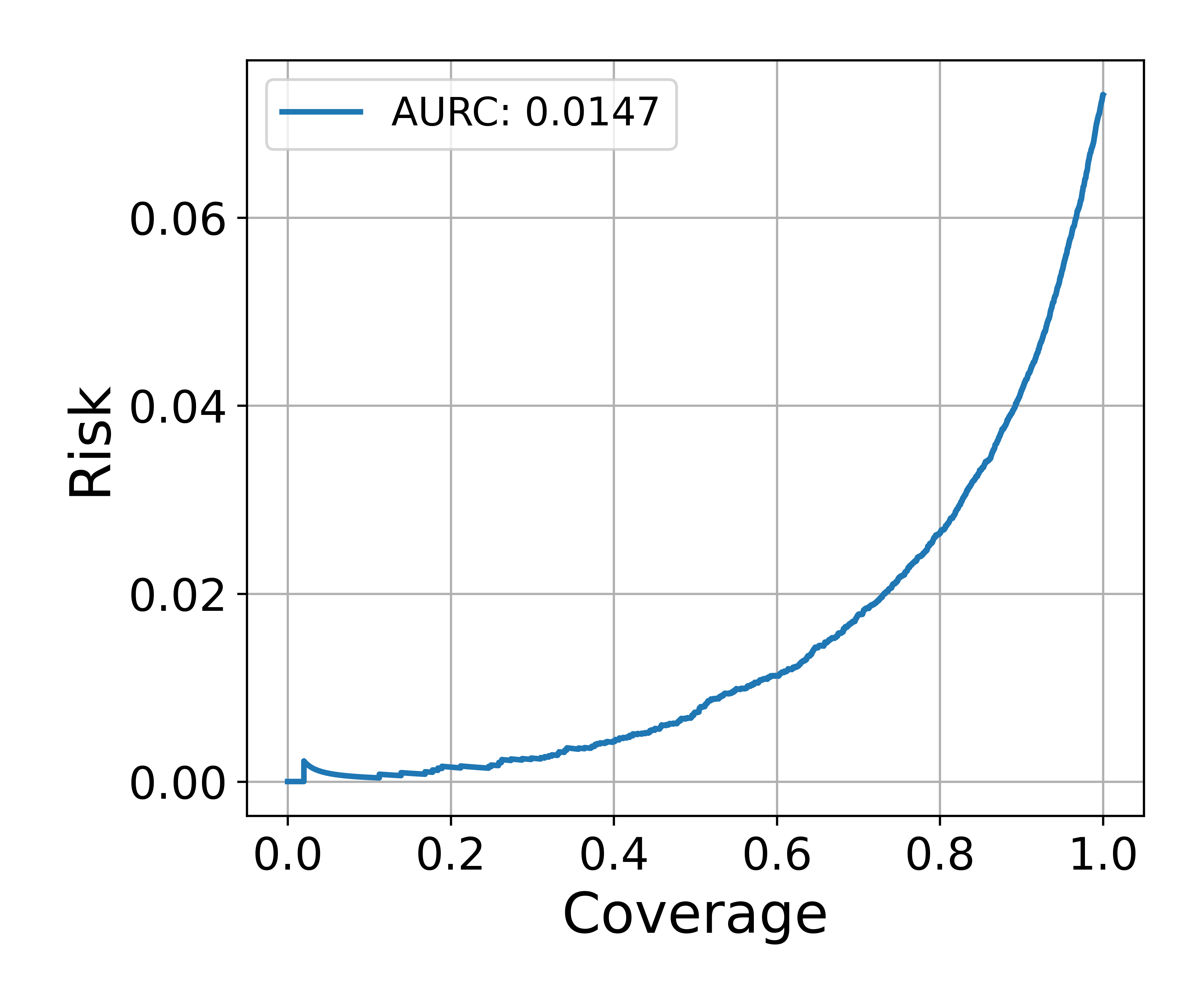}
\caption*{(b) Unseen Cls.}
\end{subfigure}
\begin{subfigure}[b]{0.22\linewidth}  
\centering 
\includegraphics[width=\linewidth]{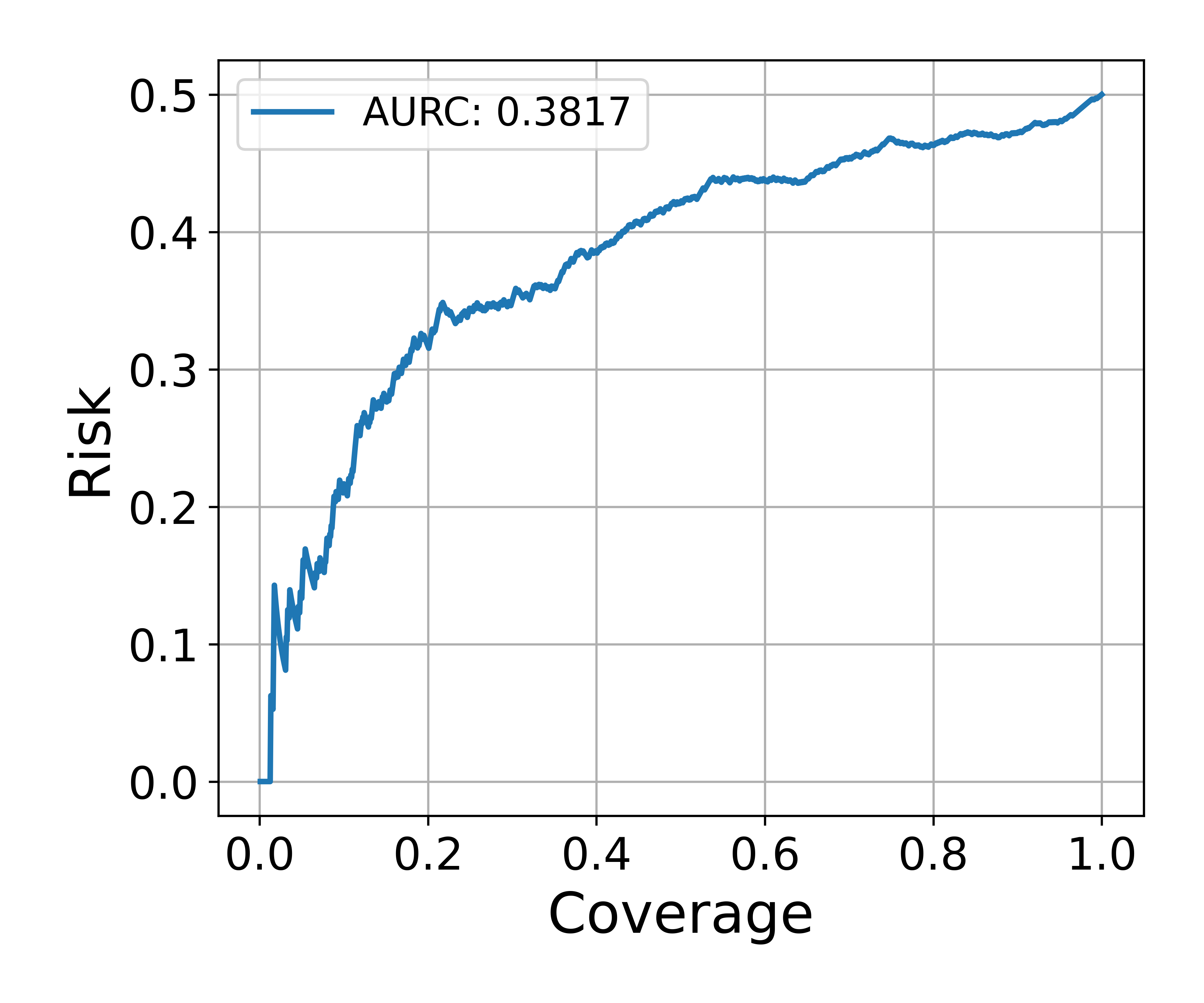}
\caption*{(c) Low Res.}
\end{subfigure}
\begin{subfigure}[b]{0.22\linewidth}
\centering 
\includegraphics[width=\linewidth]{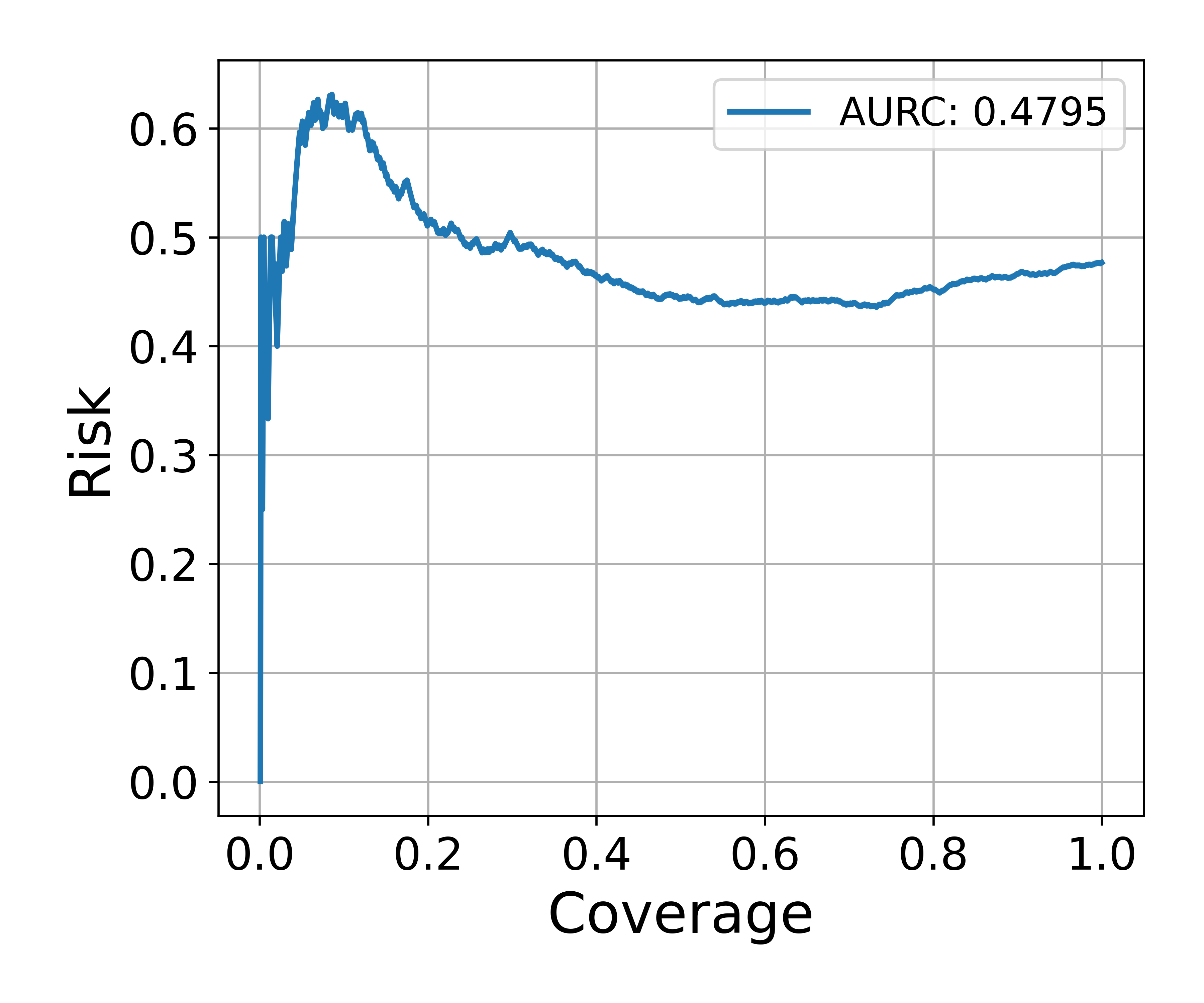}
\caption*{(d) FGSM}
\end{subfigure}
\begin{subfigure}[b]{0.22\linewidth}  
\centering 
\includegraphics[width=\linewidth]{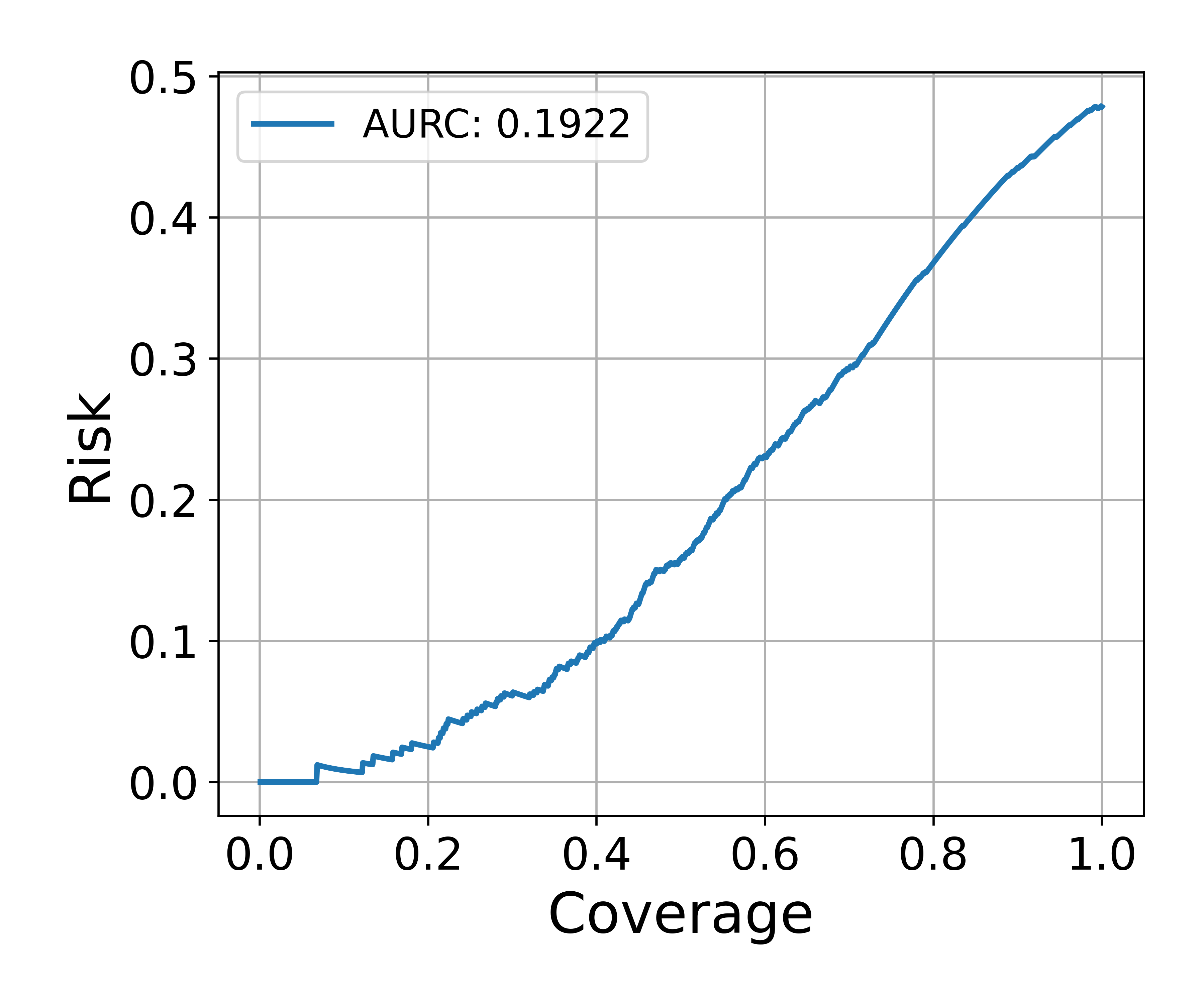}
\caption*{(e) JPEG}
\end{subfigure}
\begin{subfigure}[b]{0.22\linewidth}
\centering 
\includegraphics[width=\linewidth]{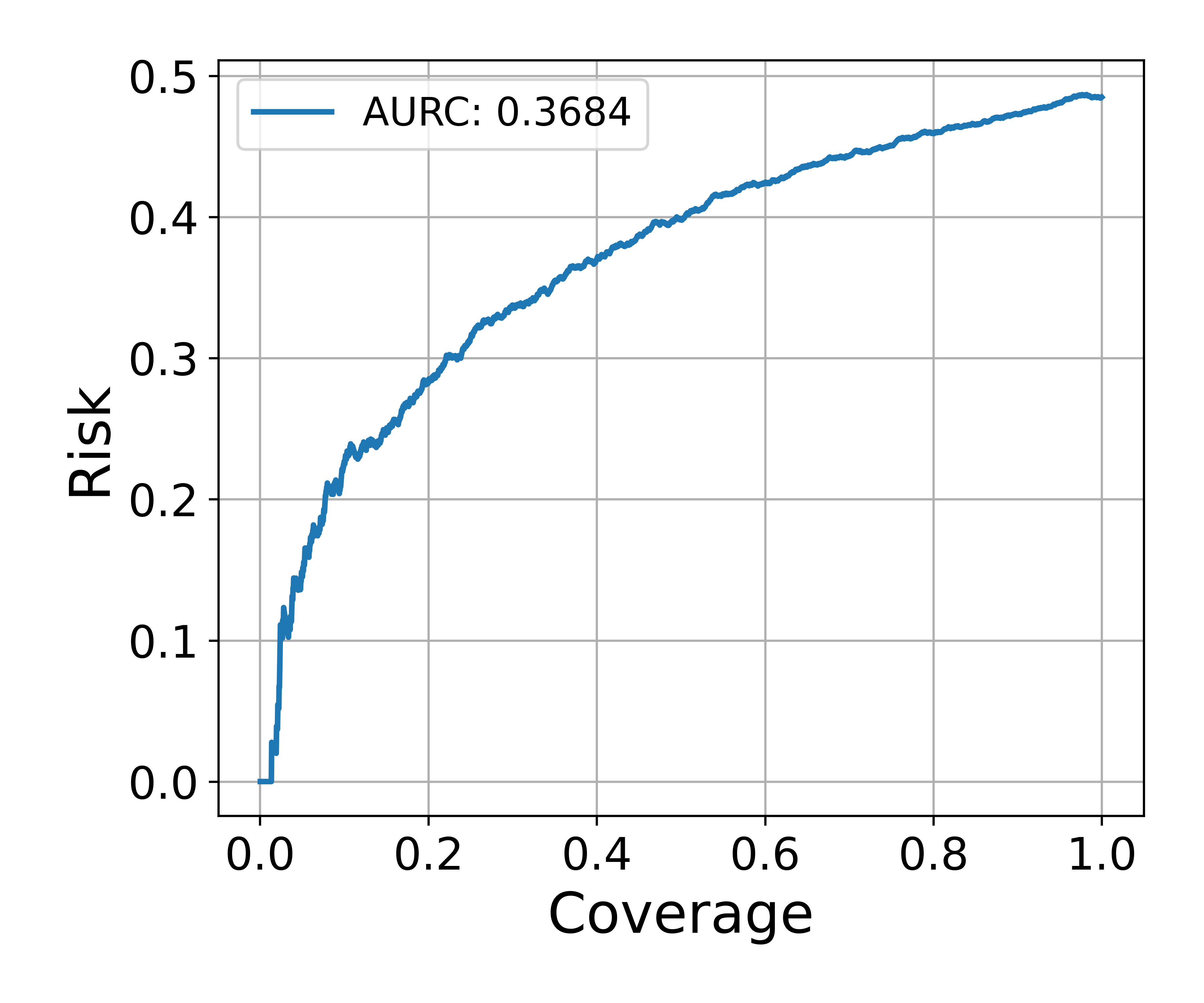}
\caption*{(f) Unseen Gen.}
\end{subfigure}
\begin{subfigure}[b]{0.22\linewidth}  
\centering 
\includegraphics[width=\linewidth]{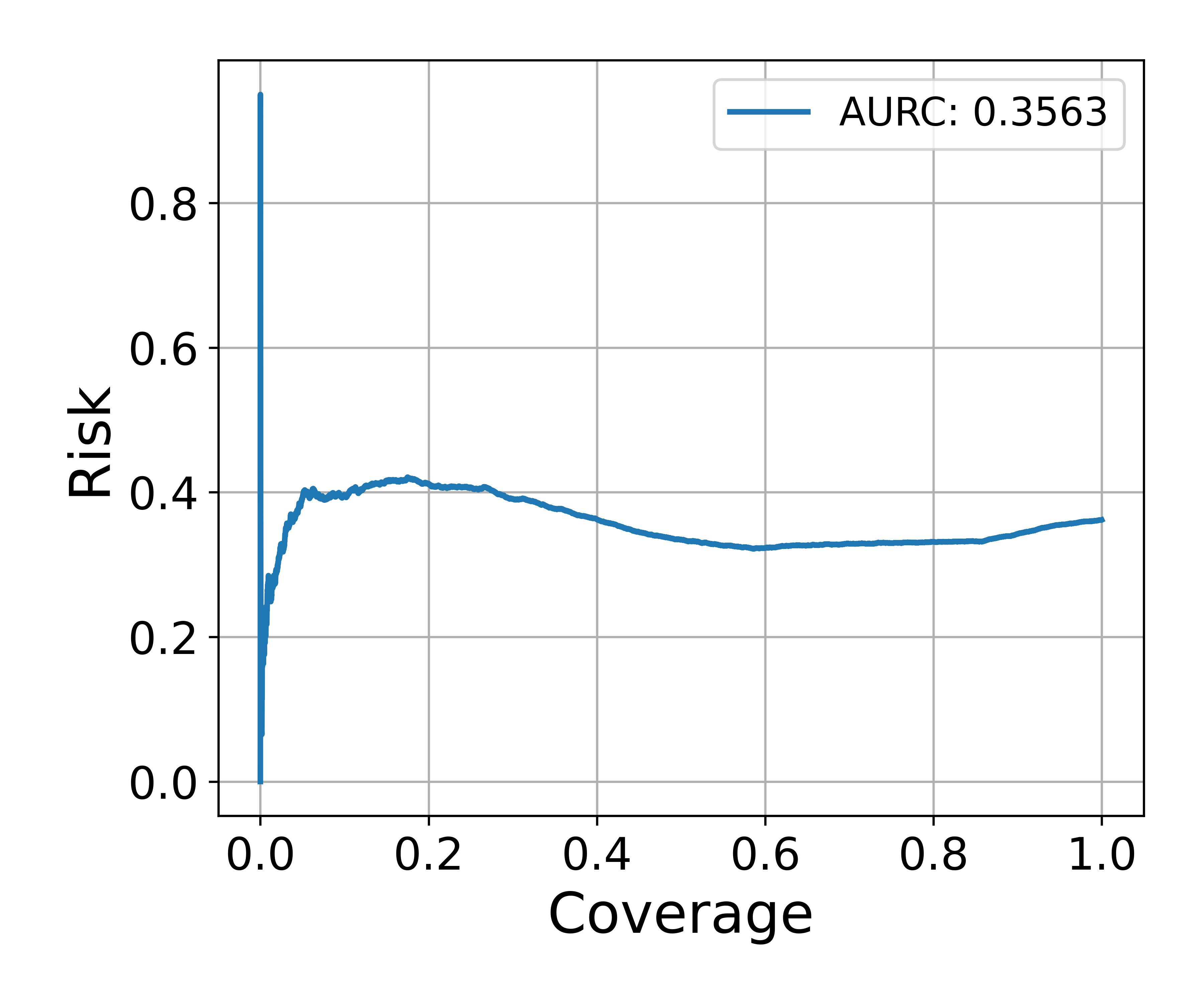}
\caption*{(g) ResNet50}
\end{subfigure}
\begin{subfigure}[b]{0.22\linewidth}  
\centering 
\includegraphics[width=\linewidth]{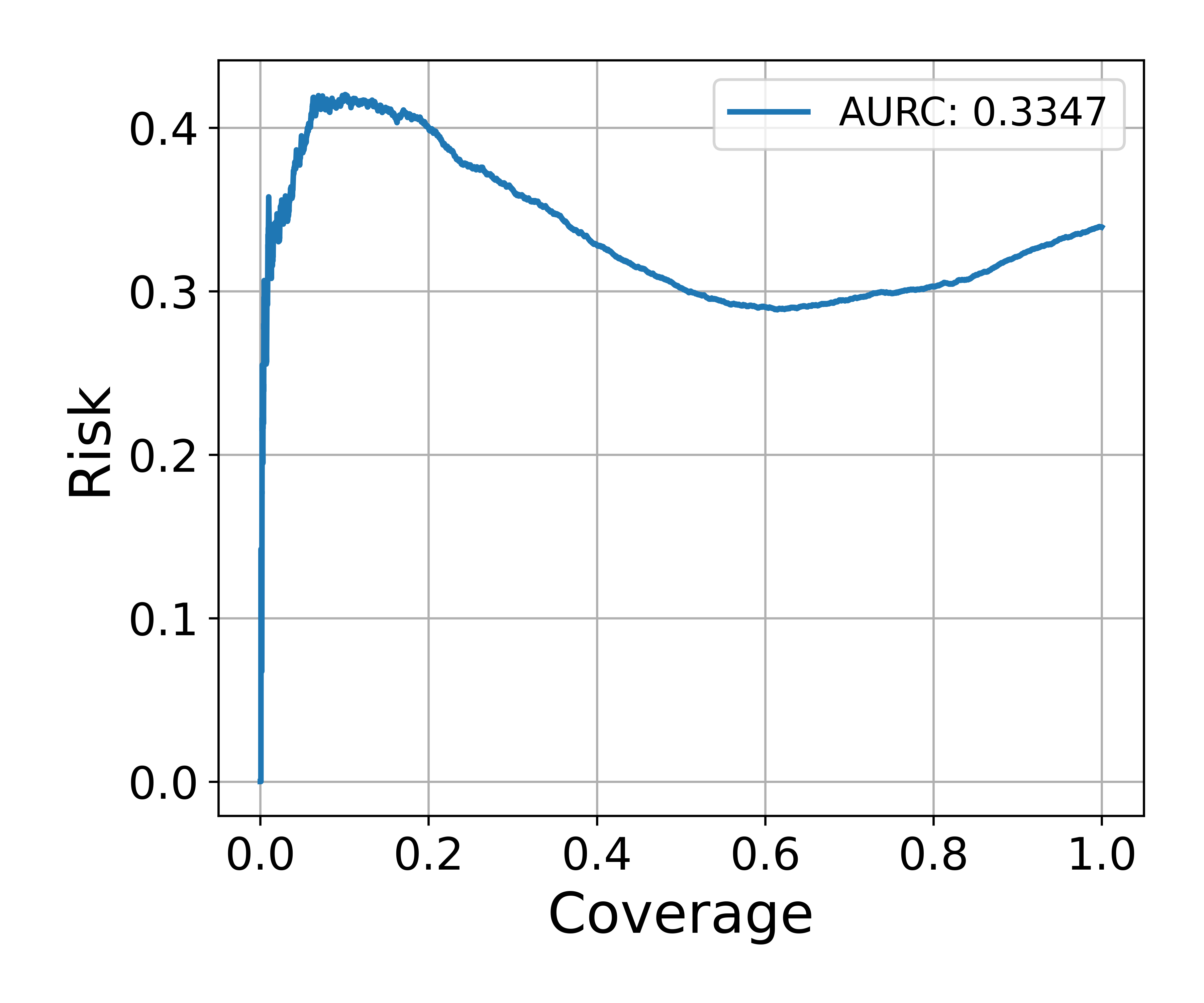}
\caption*{(h) Swin-T}
\end{subfigure}
\caption{(a)-(f): RC curves for MSP evaluated using ResNet50 on datasets with different covariate shifts. (g)-(h): RC curves for MSP under the mixture distribution, evaluated using ResNet50 and Swin-T, respectively.} 
\label{fig:rc}
\vspace{-5mm}
\end{figure}

We now examine the SC performance of logit-based CSFs under different data distributions. Figs.~\ref{fig:rc}(a)-(f) show the RC curves for MSP evaluated using ResNet50 on six covariate-shifted test sets, together with the resulting AURC. While the SC performance on Unseen Cls. and JPEG are decent, those on the remaining cases are much poorer. Specifically, the AURCs on Blur and FGSM are greater than the empirical error rates on respective test sets, meaning that MSP is behaving worse than a random CSF. A pathological SC performance is especially clear in the case of Blur, where the selective risk is going up when the coverage goes down, indicating that correctly classified samples generally have lower MSP scores than wrongly classified ones and are being rejected first when the threshold $t$ increases. Similar pathological performance is demonstrated for all six logit-based CSFs and both models, whose AURCs are presented in Tab.~\ref{tab:aurc_six}.

In practical deployment scenarios, an input could be in-distribution (ID) or from various covariate-shifted distributions. It's therefore of interest to analyze the SC performance on a mixture of multiple data distributions. We consider an even mixture of seven distributions\footnote{The number of samples varies among test sets. To handle the imbalance and create an even mixture, we reweight samples when computing SC metrics. Please refer to Appendix \ref{app:reweight} for details.}: ID, Blur, Unseen Cls., Low Res., FGSM, JPEG, Unseen Gen. are assumed to happen equal likely. In this case, the error rate of ResNet50 is 0.3618, and that of Swin-T is 0.3393 (the average of error rates over all seven cases). Figs.~\ref{fig:rc}(g)-(h) plot the RC curves for MSP evaluated using both models under the mixture distribution, together with the resulting AURC. As shown, the selective risk hardly goes down as the coverage decreases, resulting in AURCs close to those of random CSFs on both models. Other logit-based CSFs yield similar SC performance, as shown in Tab.~\ref{tab:aurc_combined}. These findings reveal that conventional final logit-based CSFs are not reliable for SID selective classification under covariate shifts. Therefore, we are motivated to exploit additional information from SID's intermediate layers.

\section{Methodology} \label{sec:method}
\subsection{Generalizing logit-based CSFs}
As mentioned in Sec.~\ref{sec:bg_rw}, logits can be regarded as similarity scores between a feature vector and feature centroids. Based on this principle, we generalize the notion of logits to SID's intermediate layers as follows. Given a DNN-based SID $f$ with $L$ hidden layers, for each layer $l \in [L]$, define a set of $K_l$ feature centroids as $\{\bm{c}_{l,1}, ..., \bm{c}_{l,K_l}\}$. Then, the generalized logits at layer $l$ can be computed by
\begin{small}\begin{equation}
    \bm{z}_l = \left [ z_{l,k} \right ]_{k=1}^{K_l} = \left [ s(\bm{h}_l, \bm{c}_{l,k}) \right ]_{k=1}^{K_l}, \label{eq:logits_general}
\end{equation}\end{small}where $\bm{h}_l$ is a feature vector at layer $l$ corresponding to some input $x$, and $s(\cdot,\cdot)$ is a given similarity metric such as cosine similarity, negative Euclidean distance, negative Mahalanobis distance, etc. Given $\bm{z}_l$ and any logit-based CSF, we can compute a confidence score $s_l$ at layer $l$. Defining $\hat{y}_l:=\argmax_{k \in [K_l]} z_{l,k}$, we have, for example, $s_l=\text{MSP}(\bm{z}_l) = \sigma_{\hat{y}_l}(\bm{z}_l)$, with MSP being the underlying logit-based CSF. Thus, given a logit-based CSF, we can extract in total $L+1$ confidence scores $\{s_1, ..., s_L, s_{L+1}\}$ from an SID, where the first $L$ scores are extracted from $L$ hidden layers, respectively, and the last score $s_{L+1}$ is the standard score computed from SID's final logits $\bm{z}$.

\textbf{Centroids and similarity metric.} In this work, we construct feature centroids by clustering. For each layer $l$, we obtain feature vectors $\bm{h}_l$ of all samples in SID's training set. Then, we perform spherical k-means \citep{dhillon2001concept} to get $K_l$ feature centroids, where $K_l$ is determined by silhouette analysis \citep{shahapure2020cluster}. As for the similarity metric $s(\cdot, \cdot)$, we adopt cosine similarity to be consistent with the clustering objective of spherical k-means. Given these specific design choices, we can rewrite Eq.~(\ref{eq:logits_general}) as
\begin{small}\begin{equation}
    \bm{z}_l = C_l^\top\hat{\bm{h}}_l, \label{eq:ulp}
\end{equation}\end{small}where $C_l=[\bm{c}_{l,1}, ..., \bm{c}_{l,K_l}]$ is a matrix composed of feature centroids obtained from spherical k-means, and $\hat{\bm{h}}_l=\bm{h}_l/||\bm{h}_l||_2$ is a $l_2$-normalized feature vector. Note that the centroids given by spherical k-means are on the unit sphere, so no normalization is needed for them. Since the feature centroids are obtained by unsupervised clustering, and Eq.~(\ref{eq:ulp}) is a simple linear projection of SID's intermediate features, we dub the logits computation in Eq.~(\ref{eq:ulp}) unsupervised linear probing (ULP).

\subsection{Preference optimization for layer aggregation}
Based on ULP, we augment the single confidence score $s_{L+1}$ computed from SID's final logits to a set of $L+1$ layerwise confidence scores $\{s_1, ..., s_L, s_{L+1}\}$. However, to perform SC, we need a single scalar as the output of the CSF $g$. Therefore, in this section, we discuss how to aggregate the $L+1$ confidence scores into a final confidence estimate.

For simplicity, we construct the final confidence estimate as a linear combination of all $L+1$ scores, i.e, $g_{\bm{w}}(x)=\sum_{l=1}^{L+1} w_l s_l$, where $\bm{w}=[w_1, ..., w_{L+1}]\in \mathbb{R}^{L+1}$. Assuming access to a hold-out validation set $V=\{(u_j, v_j)\}_{j=1}^M$, with inputs $u_j \in \mathcal{X}$ and labels $v_j \in \mathcal Y$, we optimize $\bm{w}$ on it by minimizing a preference optimization objective inspired by Eq.~(\ref{eq:po_obj})
\begin{small}
\begin{equation}
    \min_{\bm{w}} \Big \{ L_{ReSIDe}(g_{\bm{w}}, V) := \frac{1}{(M-n)n} \sum_{u^+ \in U^+} \sum_{u^- \in U^-} \ln(1 + \exp(g_{\bm{w}}(u^-) - g_{\bm{w}}(u^+))) \Big \}, \label{eq:reside}
\end{equation}\end{small}where $U^+=\{u \in V \mid f(u) = v\}$ and $U^-=\{u\in V \mid f(u) \neq v\}$ are the sets of correctly and wrongly classified samples, respectively, and $n=|U^-|$ is the number of wrongly classified samples.

Denote a solution to Eq.~(\ref{eq:reside}) as $\bm{w}^*$. As mentioned in Sec.~\ref{sec:bg_rw}, $g_{\bm{w}^*}$ tends to assign higher scores to correctly classified samples and lower scores to wrongly classified ones. In an extreme case, if $g_{\bm{w}^*}(u^+)>g_{\bm{w}^*}(u^-)$ for any $u^+\in U^+$ and $u^-\in U^-$, then we essentially achieve the optimal SC performance on $V$. To further justify the use of $L_{ReSIDe}$ in the context of SC, we show that $L_{ReSIDe}$ is in fact an upper bound of AURC in Thm.~\ref{thm:bound}, which is proved in Appendix \ref{app:proof_thm1}.

\begin{theorem}[Upper Bound of AURC via the ReSIDe Loss]
Let $L_{ReSIDe}$ be the ReSIDe loss defined in Eq.~(\ref{eq:reside}). Then, we have
\begin{small}
\begin{equation}
    \text{AURC}(g_{\bm{w}}, V) < \frac{2(M-n)n}{M^2\ln 2} L_{ReSIDe}(g_{\bm{w}}, V) + \frac{2n^2}{M^2}.
\end{equation}\end{small}Further assuming that each sample in $U^-$ has a unique confidence score, a tighter bound follows
\begin{small}\begin{equation}
    \text{AURC}(g_{\bm{w}}, V) < \frac{2(M-n)n}{M^2\ln 2} L_{ReSIDe}(g_{\bm{w}}, V) + \frac{n(n+1)}{M^2}.
\end{equation}\end{small}
\label{thm:bound}
\vspace{-5mm}
\end{theorem}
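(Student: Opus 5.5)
We will work with an explicit discrete formula for the empirical AURC and then bound it in two stages: first by a count of mis-ordered (correct, wrong) pairs, then by the logistic loss.

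\textbf{Discrete form of the AURC.} Sort the $M$ validation samples in decreasing order of $g_{\bm{w}}$. Accepting the top $k$ samples gives coverage $k/M$, so the AURC is the average of the selective risks at $k=1,\dots,M$:
\begin{equation*}
\text{AURC}(g_{\bm{w}},V)=\frac{1}{M}\sum_{k=1}^{M}\frac{e_k}{k},
\end{equation*}
where $e_k$ is the number of wrongly classified samples among the top $k$. Ties are resolved pessimistically, with errors ranked first. This choice only makes the bound harder, so it is safe.

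\textbf{Rewriting by summation by parts.} Index the errors $u^-_1,\dots,u^-_n$ by their rank positions $p_1<\dots<p_n$. Write $e_k=\sum_i \mathbbm{1}[p_i\le k]$. Then
\begin{equation*}
\text{AURC}=\frac{1}{M}\sum_{i=1}^n\sum_{k=p_i}^{M}\frac1k .
\end{equation*}
Position $p_i$ equals $m_i+i$, where $m_i$ is the number of correctly classified samples ranked above $u^-_i$ (the remaining $i-1$ entries above it are earlier errors). Hence $m_i=\#\{u^+\in U^+: g_{\bm{w}}(u^+)\ge g_{\bm{w}}(u^-_i)\}$ under the tie convention.

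The goal is a bound linear in the number of mis-ordered pairs, $D:=\sum_i (M-n-m_i)$. This counts pairs $(u^+,u^-)$ with $g_{\bm{w}}(u^-)\ge g_{\bm{w}}(u^+)$.

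\textbf{Pair counting step.} I would compare AURC to the perfect ranking (all errors last) and bound the excess. The cleanest approach is termwise. The tail $\sum_{k=p}^{M}1/k$ is bounded by its number of terms divided by its smallest index, giving $\sum_{k=p}^{M}1/k\le (M-p+1)/p$. Alternatively I would use the crude bound $e_k/k\le$ (errors above position $k$) counted against correct samples.

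The target inequality, forced by the form of the theorem, is
\begin{equation*}
\text{AURC}\le \frac{2D}{M^2}+\frac{2n^2}{M^2}.
\end{equation*}
Concretely, $\frac1M\sum_k e_k/k$ with $e_k\le k$ splits into two parts:
\begin{itemize}
\item Error-error pairs: each error $u^-_i$ sitting at position $p_i$ contributes $\sum_{k\ge p_i}1/k$, and the $i-1$ errors above it contribute at most $O(i/M)$ per unit. Summing gives the $n^2/M^2$-type term: $n(n+1)/M^2$ when errors have distinct scores (the ranks among errors are exactly $1,\dots,n$), and $2n^2/M^2$ in general, where ties among errors force the pessimistic doubling.
\item Correct-error mis-orderings: each contributes at most $2/M^2$.
\end{itemize}

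I expect the main obstacle to be extracting the factor $1/M^2$ from the harmonic tails. The plan is to bound $\sum_{k\ge p}1/k$ not via logarithms but via $e_k/k\le 2e_k/(k+e_k)$-style averaging. A double counting over pairs $(k,i)$ then gives exactly $2\times$(pairs above)$/M^2$. I would verify the constants on small cases ($n=1$, and errors all on top) before committing.

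\textbf{Logistic surrogate.} For each mis-ordered pair, $g_{\bm{w}}(u^-)-g_{\bm{w}}(u^+)\ge 0$, so $\ln(1+e^{g_{\bm{w}}(u^-)-g_{\bm{w}}(u^+)})\ge\ln 2$. For every other pair this term is still strictly positive. Therefore
\begin{equation*}
D\ln 2<\sum_{u^+}\sum_{u^-}\ln\!\left(1+e^{g_{\bm{w}}(u^-)-g_{\bm{w}}(u^+)}\right)=(M-n)n\,L_{ReSIDe}(g_{\bm{w}},V),
\end{equation*}
with strict inequality as long as $U^+$ and $U^-$ are both nonempty. Substituting $D<(M-n)nL_{ReSIDe}/\ln 2$ into the pair-counting bound yields both stated inequalities, including their strictness.
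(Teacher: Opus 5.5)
Your outer structure matches the paper's. Your logistic step is its inequality $\mathbbm{1}[q\ge 0]\le\ln(1+e^{q})/\ln 2$. Your target $\text{AURC}\le\frac{2D}{M^2}+\frac{2n^2}{M^2}$ is exactly what the paper gets by splitting $\Delta_{sele}$ into correct--wrong and wrong--wrong pairs and then invoking Theorem~8 of Franc et al.\ as $\text{AURC}<2\Delta_{sele}$. The difference is that you try to prove that last step yourself, and the pair-counting step you leave as a plan is false.

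Run the small case you proposed: $n=1$ with the error ranked first. Then $e_k=1$ for all $k$, so $\text{AURC}=\frac1M\sum_{k=1}^M\frac1k=H_M/M$. But $D=M-1$, so your right-hand side is $\frac{2(M-1)+2}{M^2}=\frac2M$, and $\Delta_{sele}=1/M$. Since $H_M>2$ once $M\ge 4$, neither your inequality nor $\text{AURC}<2\Delta_{sele}$ holds under the paper's definitions of AURC and $\Delta_{sele}$. The obstruction is the one you were worried about. An error at position $p$ carries AURC weight $\frac1M\sum_{k=p}^M\frac1k$, while the pair count gives it only $\frac{M-p+1}{M^2}$. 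At $p=1$ the ratio is $H_M$, which grows like $\ln M$. So no averaging trick yields a uniform constant $2$, and ``each mis-ordered pair contributes at most $2/M^2$'' is simply wrong.

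So the gap cannot be closed: the statement itself fails, in both versions. For $n=1$ the two additive terms coincide and the uniqueness hypothesis holds trivially. Take $M=10$, score the error $0$, and score the nine correct samples $-j\delta$, $j=1,\dots,9$. Then $\text{AURC}=H_{10}/10\approx 0.293$. Meanwhile $L_{ReSIDe}=\frac19\sum_j\ln(1+e^{j\delta})\to\ln 2$ as $\delta\to 0$, so the right-hand side tends to $0.18+0.02=0.20$.

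What your decomposition does support is the comparison $\Delta_{sele}\le\text{AURC}\le H_M\,\Delta_{sele}$. For the upper bound, let $m=\#\{u\in V: g_{\bm{w}}(u)\le g_{\bm{w}}(u^-)\}$. The contribution of an error $u^-$ to AURC is at most $\frac1M$ times the sum of $1/k$ over the last $m$ positions. Because $1/k$ is decreasing, that sum is at most $mH_M/M$. The lower bound uses $1/k\ge 1/M$. The constant $H_M$ is attained by the example above. Chaining this with the pair split and the logistic bound gives the theorem with the factor $2$ replaced by $H_M\le 1+\ln M$, stated non-strictly.

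Separately, your claimed strict inequality $D\ln 2<(M-n)n\,L_{ReSIDe}$ also fails when all scores coincide, since then every term equals $\ln 2$.
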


Therefore, minimizing the ReSIDe loss effectively minimizes an upper bound of the AURC, thus resulting in $g_{\bm{w}}$ with superior SC performance. To reduce complexity in the actual optimization process, we sample mini-batches of correctly and wrongly classified samples from $V$, denoted as $\mathcal{U}^+$ and $\mathcal{U}^-$, respectively, and minimize $L_{ReSIDe}(g_{\bm{w}}, \mathcal{U}^+ \cup\mathcal{U}^-)$ via gradient descent for each mini-batch until convergence or up to a fixed number of epochs. To conclude this section, we refer readers to Fig.~\ref{fig:pipeline} for an illustration of the overall ReSIDe framework.

\section{Experiments} \label{sec:experiments}
\subsection{Experimental settings}
To perform preference optimization, we curate seven hold-out validation sets based on GenImage, each corresponding to one of the testing distributions within ID, Blur, Unseen Cls., Low Res., FGSM, JPEG, and Unseen Gen. Note that in real-world scenarios, it's feasible for practitioners to construct hold-out validation sets corresponding to common types of covariate shift, but it's unrealistic to anticipate the exact covariate shifts in the wild. For example, at the time of preparing the validation sets, new generators, from which potential testing images are generated, may not have been invented yet. Therefore, to reflect this inevitable bias, there’s a distributional difference between the covariate shifts in our hold-out validation sets and those in the test sets. For instance, the validation set with Unseen Gen. contains synthetic images generated by GLIDE and VQDM, instead of those generated by the four generators used in the Unseen Gen. test set. More information on validation sets, training recipes, and full implementation details are provided in Appendix \ref{app:imp}.

As an approach aiming to fix the pathological behavior of logit-based CSFs, we benchmark ReSIDe against each underlying logit-based CSF applied only at the final layer. Given a logit-based CSF, we optimize $\bm{w}$ on a hold-out validation set with some covariate shift, and compare the AURC of $g_{\bm{w}^*}$ to that of the underlying final logit-based CSF on the test set with the same covariate shift. For comprehensiveness, this comparison is done for both SID models, all six types of logit-based CSFs and all six types of covariate shifts. Moreover, we also optimize $\bm{w}$ on the combination of all validation sets, and evaluate the resulting $g_{\bm{w}^*}$ under the mixture distribution mentioned in Sec.~\ref{sec:path_sc}.


\subsection{Main results}
Tab.~\ref{tab:aurc_six} shows superior performance of ReSIDe for each logit-based CSF, covariate shift, and SID model. Compared to the baseline of the underlying final logit-based CSF, ReSIDe achieves consistent and significant improvement in SC performance, reducing AURC by up to 69.55\%. Tab~\ref{tab:aurc_combined} compares ReSIDe to the baseline on the mixture distribution for all logit-based CSFs and both models. ReSIDe again outperforms the baseline in all cases, achieving up to 44.88\% AURC reduction.

\begin{table}[!t]
\centering
\caption{Comparison of AURCs under different covariate shifts. Results for ReSIDe are averaged over 3 trials with stds reported. $\Delta$ shows the AURC reduction achieved by ReSIDe in percentage.}
\label{tab:aurc_six}
\resizebox{0.95\textwidth}{!}{
\begin{tabular}{ccccccccccc}
\toprule
\multirow{2}{*}{Classifier} & \multirow{2}{*}{CSF} & \multicolumn{3}{c}{Blur} & \multicolumn{3}{c}{Unseen Cls.} & \multicolumn{3}{c}{Low Res.} \\
\cmidrule(lr){3-5} \cmidrule(lr){6-8} \cmidrule(lr){9-11}
 &  & Baseline & ReSIDe & $\Delta$ & Baseline & ReSIDe & $\Delta$ & Baseline & ReSIDe & $\Delta$ \\
\midrule

\multirow{6}{*}{ResNet50}
& MSP & 0.7276 & 0.2406\ms{0.0006} & \textcolor{ForestGreen}{66.93\%} & 0.0147 & 0.0095\ms{0.0000} & \textcolor{ForestGreen}{35.37\%} & 0.3817 & 0.3610\ms{0.0016} & \textcolor{ForestGreen}{5.42\%} \\
& SM  & 0.7276 & 0.2467\ms{0.0011} & \textcolor{ForestGreen}{66.09\%} & 0.0147 & 0.0095\ms{0.0001} & \textcolor{ForestGreen}{35.37\%} & 0.3817 & 0.3811\ms{0.0001} & \textcolor{ForestGreen}{0.16\%} \\
& ML  & 0.7271 & 0.2214\ms{0.0007} & \textcolor{ForestGreen}{69.55\%} & 0.0146 & 0.0128\ms{0.0001} & \textcolor{ForestGreen}{12.33\%} & 0.3782 & 0.3368\ms{0.0005} & \textcolor{ForestGreen}{10.95\%} \\
& LM  & 0.7276 & 0.2414\ms{0.0003} & \textcolor{ForestGreen}{66.82\%} & 0.0147 & 0.0092\ms{0.0001} & \textcolor{ForestGreen}{37.41\%} & 0.3817 & 0.3691\ms{0.0064} & \textcolor{ForestGreen}{3.30\%} \\
& NE  & 0.7276 & 0.2472\ms{0.0001} & \textcolor{ForestGreen}{66.03\%} & 0.0147 & 0.0096\ms{0.0003} & \textcolor{ForestGreen}{34.69\%} & 0.3817 & 0.3552\ms{0.0009} & \textcolor{ForestGreen}{6.94\%} \\
& NGI & 0.7276 & 0.2524\ms{0.0006} & \textcolor{ForestGreen}{65.31\%} & 0.0147 & 0.0096\ms{0.0002} & \textcolor{ForestGreen}{34.69\%} & 0.3817 & 0.3582\ms{0.0011} & \textcolor{ForestGreen}{6.16\%} \\

\midrule

\multirow{6}{*}{Swin-T}
& MSP & 0.6535 & 0.2889\ms{0.0003} & \textcolor{ForestGreen}{55.79\%} & 0.0047 & 0.0025\ms{0.0000} & \textcolor{ForestGreen}{46.81\%} & 0.4167 & 0.3255\ms{0.0004} & \textcolor{ForestGreen}{21.89\%} \\
& SM  & 0.6535 & 0.2379\ms{0.0007} & \textcolor{ForestGreen}{63.60\%} & 0.0047 & 0.0025\ms{0.0000} & \textcolor{ForestGreen}{46.81\%} & 0.4167 & 0.3178\ms{0.0001} & \textcolor{ForestGreen}{23.73\%} \\
& ML  & 0.6007 & 0.2311\ms{0.0006} & \textcolor{ForestGreen}{61.53\%} & 0.0059 & 0.0040\ms{0.0000} & \textcolor{ForestGreen}{32.20\%} & 0.3967 & 0.3236\ms{0.0038} & \textcolor{ForestGreen}{18.43\%} \\
& LM  & 0.6535 & 0.2187\ms{0.0003} & \textcolor{ForestGreen}{66.53\%} & 0.0047 & 0.0025\ms{0.0000} & \textcolor{ForestGreen}{46.81\%} & 0.4167 & 0.3073\ms{0.0035} & \textcolor{ForestGreen}{26.25\%} \\
& NE  & 0.6535 & 0.3079\ms{0.0006} & \textcolor{ForestGreen}{52.88\%} & 0.0047 & 0.0025\ms{0.0000} & \textcolor{ForestGreen}{46.81\%} & 0.4167 & 0.3293\ms{0.0000} & \textcolor{ForestGreen}{20.97\%} \\
& NGI & 0.6535 & 0.3154\ms{0.0001} & \textcolor{ForestGreen}{51.74\%} & 0.0047 & 0.0025\ms{0.0000} & \textcolor{ForestGreen}{46.81\%} & 0.4167 & 0.3349\ms{0.0004} & \textcolor{ForestGreen}{19.63\%} \\

\bottomrule
\toprule
\multirow{2}{*}{Classifier} & \multirow{2}{*}{CSF} & \multicolumn{3}{c}{FGSM} & \multicolumn{3}{c}{JPEG} & \multicolumn{3}{c}{Unseen Gen.} \\
\cmidrule(lr){3-5} \cmidrule(lr){6-8} \cmidrule(lr){9-11}
 &  & Baseline & ReSIDe & $\Delta$ & Baseline & ReSIDe & $\Delta$ & Baseline & ReSIDe & $\Delta$ \\
\midrule

\multirow{6}{*}{ResNet50}
& MSP & 0.4795 & 0.3449\ms{0.0009} & \textcolor{ForestGreen}{28.07\%} & 0.1922 & 0.1922\ms{0.0000} & 0.00\% & 0.3684 & 0.3630\ms{0.0000} & \textcolor{ForestGreen}{1.47\%} \\
& SM  & 0.4795 & 0.4030\ms{0.0003} & \textcolor{ForestGreen}{15.95\%} & 0.1922 & 0.1922\ms{0.0000} & 0.00\% & 0.3684 & 0.3630\ms{0.0000} & \textcolor{ForestGreen}{1.47\%} \\
& ML  & 0.4847 & 0.4501\ms{0.0009} & \textcolor{ForestGreen}{7.14\%} & 0.1925 & 0.1864\ms{0.0001} & \textcolor{ForestGreen}{3.17\%} & 0.3650 & 0.3650\ms{0.0000} & 0.00\% \\
& LM  & 0.4795 & 0.4035\ms{0.0011} & \textcolor{ForestGreen}{15.85\%} & 0.1922 & 0.1855\ms{0.0001} & \textcolor{ForestGreen}{3.49\%} & 0.3684 & 0.3630\ms{0.0000} & \textcolor{ForestGreen}{1.47\%} \\
& NE  & 0.4795 & 0.3461\ms{0.0001} & \textcolor{ForestGreen}{27.82\%} & 0.1922 & 0.1922\ms{0.0000} & 0.00\% & 0.3684 & 0.3630\ms{0.0000} & \textcolor{ForestGreen}{1.47\%} \\
& NGI & 0.4795 & 0.3467\ms{0.0002} & \textcolor{ForestGreen}{27.70\%} & 0.1922 & 0.1922\ms{0.0000} & 0.00\% & 0.3684 & 0.3630\ms{0.0000} & \textcolor{ForestGreen}{1.47\%} \\

\midrule

\multirow{6}{*}{Swin-T}
& MSP & 0.3023 & 0.2889\ms{0.0000} & \textcolor{ForestGreen}{4.43\%} & 0.1857 & 0.1764\ms{0.0001} & \textcolor{ForestGreen}{5.01\%} & 0.3480 & 0.3193\ms{0.0000} & \textcolor{ForestGreen}{8.25\%} \\
& SM  & 0.3023 & 0.3014\ms{0.0002} & \textcolor{ForestGreen}{0.30\%} & 0.1857 & 0.1727\ms{0.0002} & \textcolor{ForestGreen}{7.00\%} & 0.3480 & 0.3193\ms{0.0000} & \textcolor{ForestGreen}{8.25\%} \\
& ML  & 0.3022 & 0.2460\ms{0.0021} & \textcolor{ForestGreen}{18.60\%} & 0.1899 & 0.1642\ms{0.0009} & \textcolor{ForestGreen}{13.53\%} & 0.3737 & 0.3737\ms{0.0000} & 0.00\% \\
& LM  & 0.3023 & 0.2956\ms{0.0024} & \textcolor{ForestGreen}{2.22\%} & 0.1857 & 0.1698\ms{0.0000} & \textcolor{ForestGreen}{8.56\%} & 0.3480 & 0.3193\ms{0.0000} & \textcolor{ForestGreen}{8.25\%} \\
& NE  & 0.3023 & 0.2889\ms{0.0000} & \textcolor{ForestGreen}{4.43\%} & 0.1857 & 0.1720\ms{0.0001} & \textcolor{ForestGreen}{7.38\%} & 0.3480 & 0.3193\ms{0.0000} & \textcolor{ForestGreen}{8.25\%} \\
& NGI & 0.3023 & 0.2889\ms{0.0000} & \textcolor{ForestGreen}{4.43\%} & 0.1857 & 0.1729\ms{0.0002} & \textcolor{ForestGreen}{6.89\%} & 0.3480 & 0.3193\ms{0.0000} & \textcolor{ForestGreen}{8.25\%} \\

\bottomrule
\end{tabular}}
\end{table}

\begin{table}[!t]
\vspace{-5mm}
\centering
\begin{minipage}{0.43\textwidth}
\caption{Comparison of AURCs under the mixture distribution.}
\label{tab:aurc_combined}
\centering\resizebox{\textwidth}{!}{
\begin{tabular}{ccccc}
\toprule
Classifier & CSF & Baseline & ReSIDe & $\Delta$ \\
\midrule
\multirow{6}{*}{ResNet50}
& MSP & 0.3563 & 0.2004\ms{0.0008} & \textcolor{ForestGreen}{43.76\%} \\
& SM  & 0.3563 & 0.2072\ms{0.0006} & \textcolor{ForestGreen}{41.85\%} \\
& ML  & 0.3635 & 0.2266\ms{0.0034} & \textcolor{ForestGreen}{37.66\%} \\
& LM  & 0.3563 & 0.2094\ms{0.0014} & \textcolor{ForestGreen}{41.23\%} \\
& NE  & 0.3563 & 0.1972\ms{0.0003} & \textcolor{ForestGreen}{44.65\%} \\
& NGI & 0.3563 & 0.2028\ms{0.0035} & \textcolor{ForestGreen}{43.08\%} \\
\midrule
\multirow{6}{*}{Swin-T}
& MSP & 0.3347 & 0.1947\ms{0.0006} & \textcolor{ForestGreen}{41.83\%} \\
& SM  & 0.3347 & 0.1902\ms{0.0022} & \textcolor{ForestGreen}{43.17\%} \\
& ML  & 0.2868 & 0.2048\ms{0.0001} & \textcolor{ForestGreen}{28.59\%} \\
& LM  & 0.3347 & 0.1858\ms{0.0002} & \textcolor{ForestGreen}{44.49\%} \\
& NE  & 0.3347 & 0.1879\ms{0.0005} & \textcolor{ForestGreen}{43.86\%} \\
& NGI & 0.3347 & 0.1845\ms{0.0003} & \textcolor{ForestGreen}{44.88\%} \\
\bottomrule
\end{tabular}}
\end{minipage}
\hfil
\begin{minipage}{0.45\textwidth}
\caption{Comparison of AURCs under the mixture distribution.}
\label{tab:layer_combined}
\centering\resizebox{\textwidth}{!}{
\begin{tabular}{ccccc}
\toprule
Classifier & CSF & Best Layer & ReSIDe & $\Delta$ \\
\midrule
\multirow{6}{*}{ResNet50}
& MSP & 0.2554 & 0.2004\ms{0.0008} & \textcolor{ForestGreen}{21.53\%} \\
& SM  & 0.2554 & 0.2072\ms{0.0006} & \textcolor{ForestGreen}{18.87\%} \\
& ML  & 0.2580 & 0.2266\ms{0.0034} & \textcolor{ForestGreen}{12.17\%} \\
& LM  & 0.2554 & 0.2094\ms{0.0014} & \textcolor{ForestGreen}{18.01\%} \\
& NE  & 0.2554 & 0.1972\ms{0.0003} & \textcolor{ForestGreen}{22.79\%} \\
& NGI & 0.2554 & 0.2028\ms{0.0035} & \textcolor{ForestGreen}{20.60\%} \\
\midrule
\multirow{6}{*}{Swin-T}
& MSP & 0.2266 & 0.1947\ms{0.0006} & \textcolor{ForestGreen}{14.08\%} \\
& SM  & 0.2266 & 0.1902\ms{0.0022} & \textcolor{ForestGreen}{16.06\%} \\
& ML  & 0.2152 & 0.2048\ms{0.0001} & \textcolor{ForestGreen}{4.83\%} \\
& LM  & 0.2266 & 0.1858\ms{0.0002} & \textcolor{ForestGreen}{18.01\%} \\
& NE  & 0.2299 & 0.1879\ms{0.0005} & \textcolor{ForestGreen}{18.27\%} \\
& NGI & 0.2266 & 0.1845\ms{0.0003} & \textcolor{ForestGreen}{18.58\%} \\
\bottomrule
\end{tabular}}
\end{minipage}
\vspace{-3mm}
\end{table}

\subsection{Extensions}
\textbf{Orthogonality to logit transformation.} Tunable logit transformation are a collection of methods to improve post-hoc SC performance of logit-based CSFs. For example, pNorm \citep{cattelan2023fix, cattelan2023selective} applies centralization followed by $l_p$ normalization to logits to boost the SC performance of MaxLogit (ML), where $p$ is optimized by grid search on a hold-out validation set. Here, we show that ReSIDe can be used on top of pNorm to get further gain. Particularly, we apply pNorm not only to SID's final logits but also to the layerwsie logits extracted by ULP, while keeping other procedures of ReSIDe unchanged; we dub this combined method as ReSIDe-pNorm. Using ML as the underlying logit-based CSF, Tab.~\ref{tab:maxlogit_pnorm} compares the performance of ReSIDe-pNorm to the standard pNorm, which only utilizes SID's final logits. It's shown that ReSIDe-pNorm outperforms the standard pNorm in general, achieving up to 44.91\% AURC reduction. Another tunable logit transformation used in the SC literature is temperature scaling (TS) \citep{guo2017calibration, cattelan2023fix}. While TS can improve the SC performance of some logit-based CSFs in multi-class classification, it loses effectiveness for all of the six discussed logit-based CSFs in the case of binary classification, since the ranking of confidence scores is not influenced by TS. Therefore, TS is excluded from our discussion.

\begin{table}[!t]
\centering
\caption{Comparison of AURCs using ML as the underlying logit-based CSF.}
\label{tab:maxlogit_pnorm}
\resizebox{0.7\textwidth}{!}{
\begin{tabular}{cccccccc}
\toprule
& & Blur & Unseen Cls. & Low Res. & FGSM & JPEG & Unseen Gen. \\
\midrule
\multirow{3}{*}{ResNet50} & pNorm & 0.5144 & 0.0247 & 0.4834 & 0.1747 & 0.1453 & 0.1540 \\
& ReSIDe-pNorm & 0.2834 & 0.0204 & 0.4498 & 0.1747 & 0.1453 & 0.1540 \\
& $\Delta$ & \textcolor{ForestGreen}{44.91\%} & \textcolor{ForestGreen}{17.41\%} & \textcolor{ForestGreen}{6.95\%} & 0.00\% & 0.00\% & 0.00\% \\
\midrule
\multirow{3}{*}{Swin-T} & pNorm & 0.5370 & 0.0167 & 0.4928 & 0.2452 & 0.1503 & 0.1496 \\
& ReSIDe-pNorm & 0.3135 & 0.0144 & 0.3509 & 0.2182 & 0.1441 & 0.1471 \\
& $\Delta$ & \textcolor{ForestGreen}{41.62\%} & \textcolor{ForestGreen}{13.77\%} & \textcolor{ForestGreen}{28.79\%} & \textcolor{ForestGreen}{11.01\%} & \textcolor{ForestGreen}{4.13\%} & \textcolor{ForestGreen}{1.67\%} \\
\bottomrule
\end{tabular}}
\vspace{-2mm}
\end{table}

\textbf{Ablation study: the single best layer vs. the optimized linear combination.} The unified ReSIDe framework consists of two main components: (1) layerwsie score extraction based on ULP, and (2) score aggregation based on preference optimization. We hereby conduct an ablation study to analyze the contribution of each components. Given a logit-based CSF, we define the best layer as the layer with index $l^*=\argmin_{l\in [L+1]}\text{AURC}(g_{\bm{e}_l},V)$, where $\bm{e}_l \in \mathbb{R}^{L+1}$ denotes a one-hot vector whose \textit{l}-th entry is one. Then, we evaluate $\text{AURC}(g_{\bm{e}_{l^*}}, T)$ on the corresponding test set. Tab.~\ref{tab:layer_combined} shows the performance of the best layer for each logit-based CSF and SID model under the mixture distribution. Comparing the best layer's results to those of the baseline presented in Tab.~\ref{tab:aurc_combined}, we see that the best layer consistently outperforms the baseline. Nonetheless, ReSIDe provides further improvement over the best layer, achieving up to 22.79\% AURC reduction. This shows that both components within ReSIDe contribute meaningfully to its superior performance.

\begin{figure}[!t]
\centering
\begin{subfigure}[b]{0.22\textwidth}  
\centering 
\includegraphics[width=\textwidth]{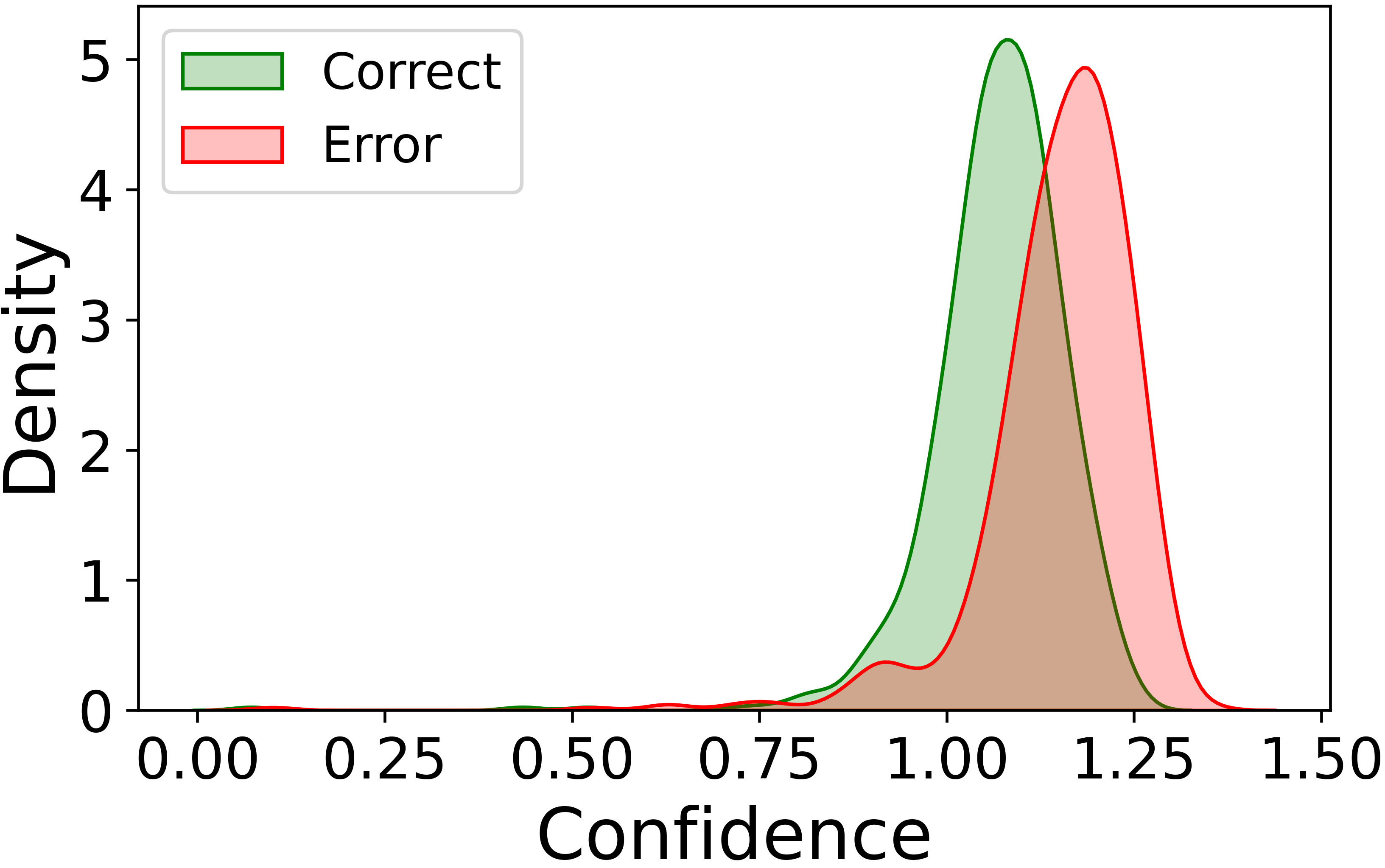}
\caption*{(a) Baseline, ResNet50}
\end{subfigure}
\begin{subfigure}[b]{0.22\textwidth}  
\centering 
\includegraphics[width=\textwidth]{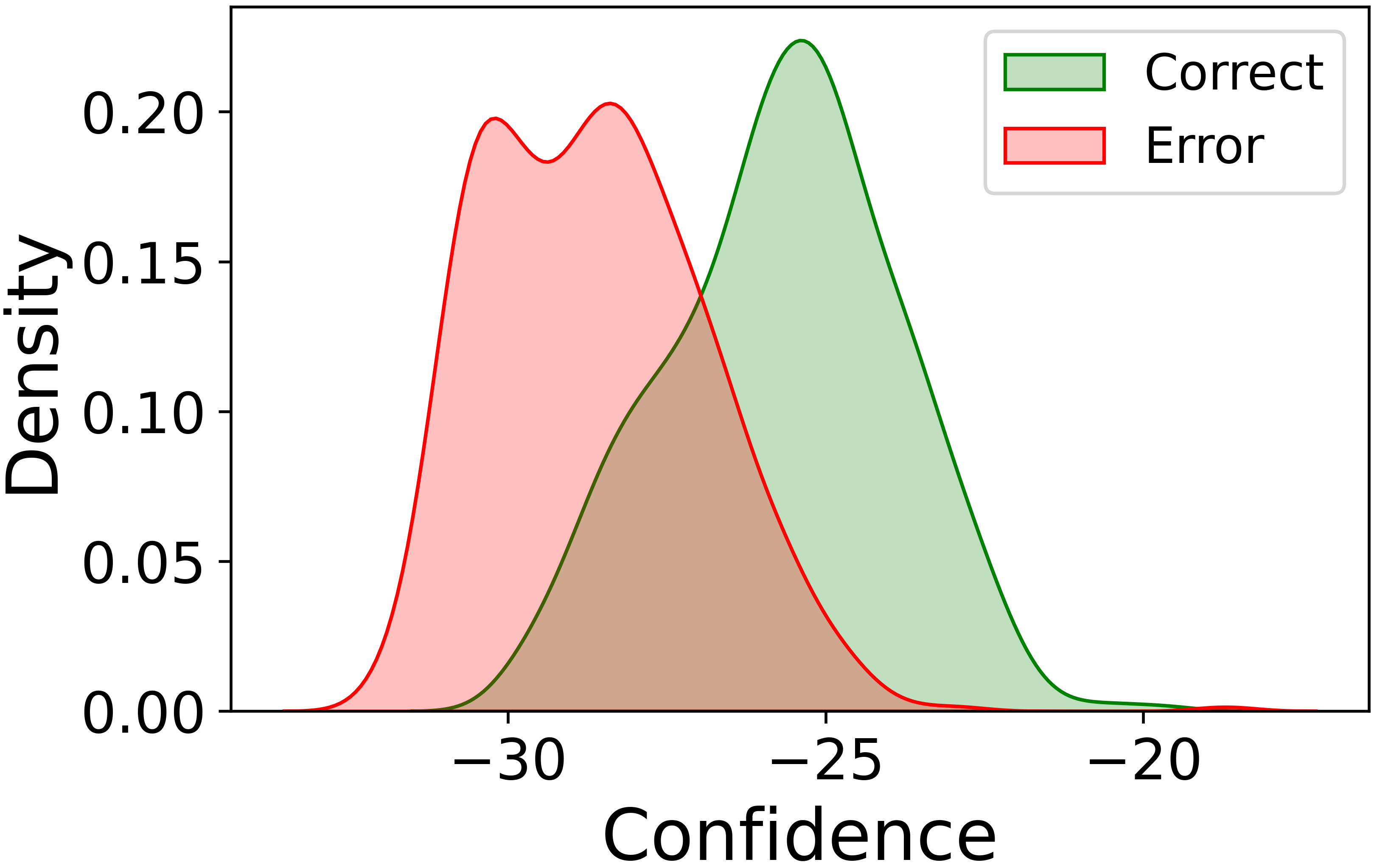}
\caption*{(b) ReSIDe, ResNet50}
\end{subfigure}
\begin{subfigure}[b]{0.22\textwidth}  
\centering 
\includegraphics[width=\textwidth]{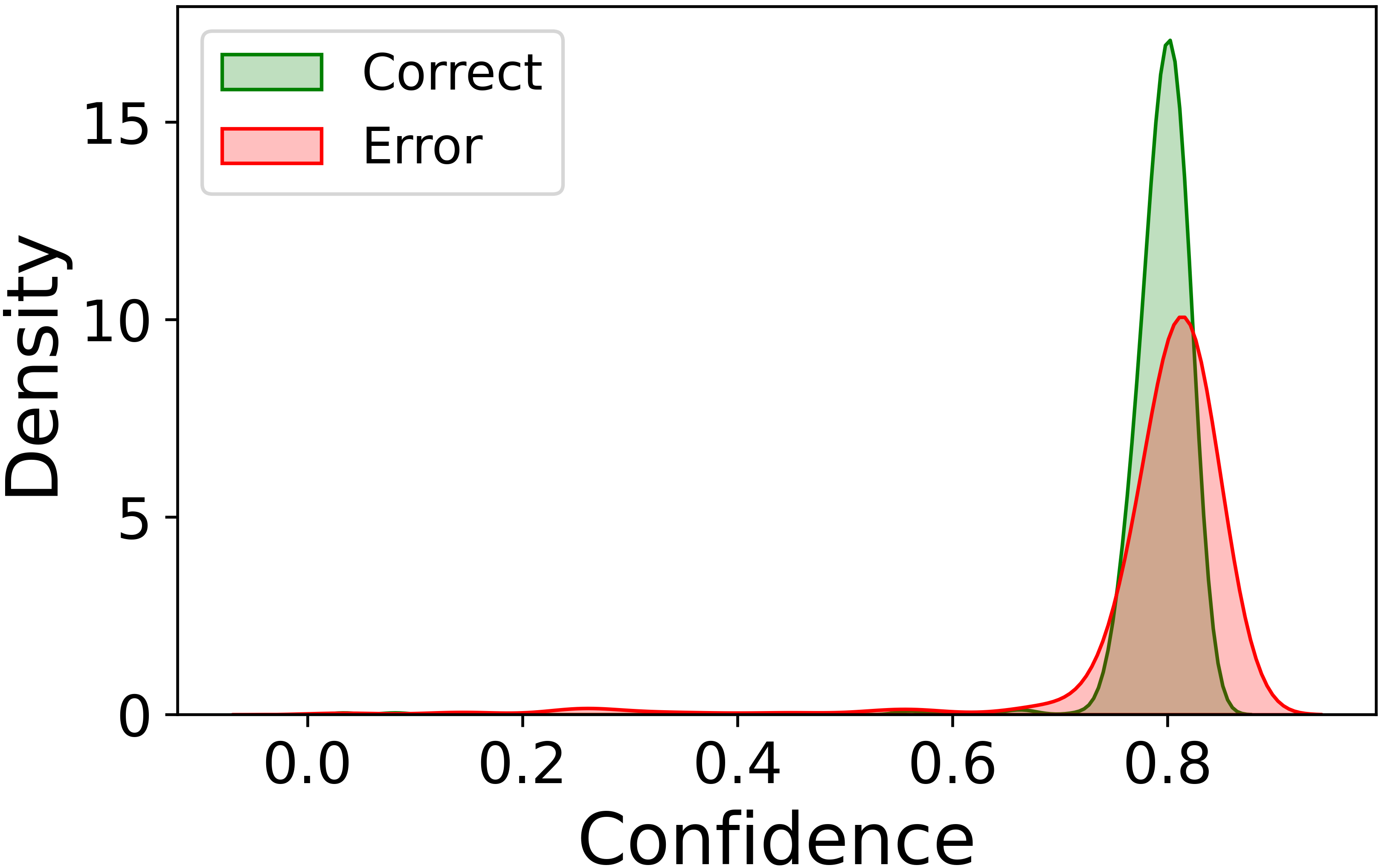}
\caption*{(c) Baseline, Swin-T}
\end{subfigure}
\begin{subfigure}[b]{0.22\textwidth}  
\centering 
\includegraphics[width=\textwidth]{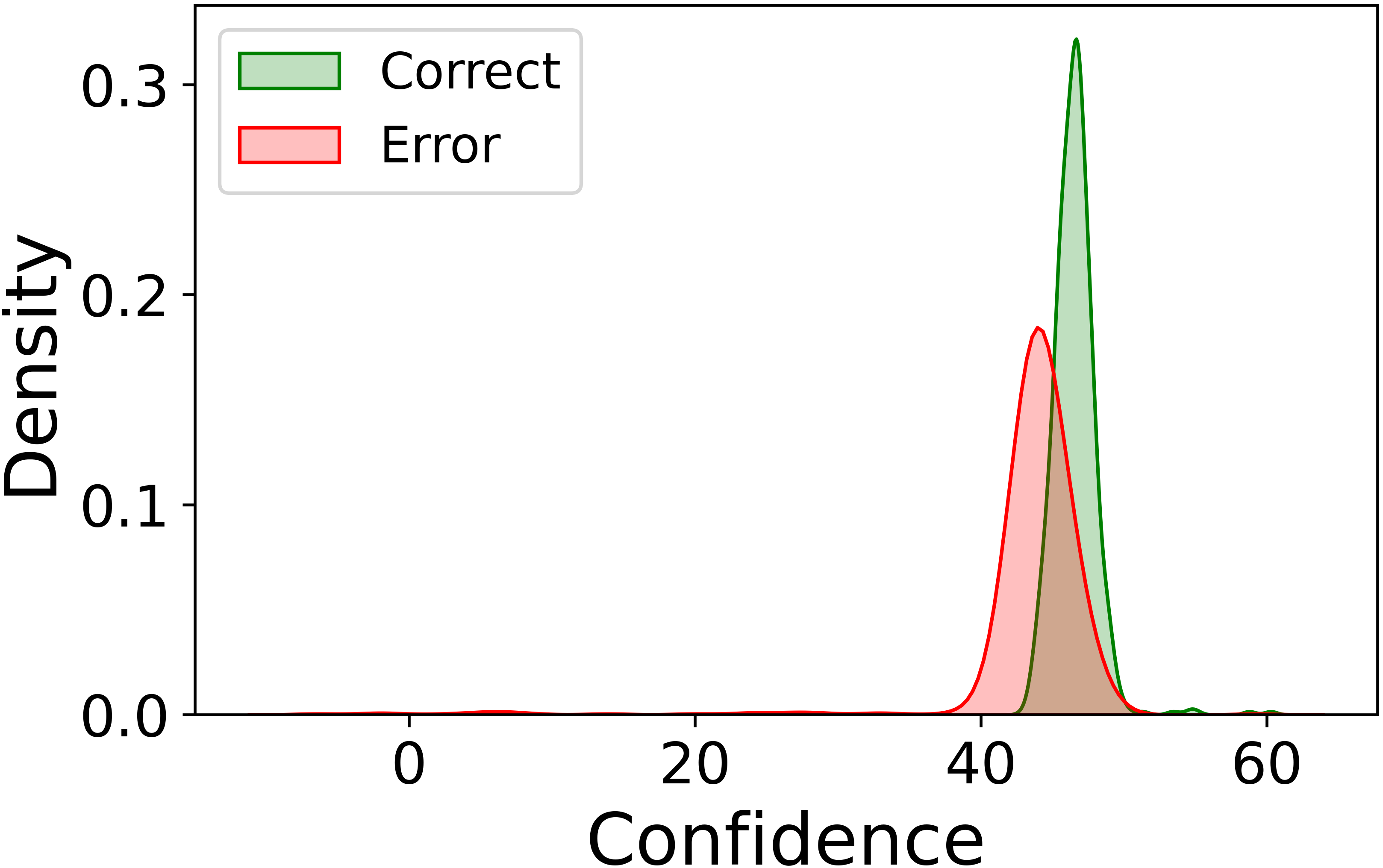}
\caption*{(d) ReSIDe, Swin-T}
\end{subfigure}
\vspace{-1mm}
\caption{Comparison of confidence score distributions between the baseline and ReSIDe.} 
\label{fig:score_dist}
\vspace{-5mm}
\end{figure}

\textbf{Distribution of confidence scores.} Since the ranking of confidence scores determines the SC performance, it's informative to analyze the confidence score distributions of correctly classified samples and wrongly classified samples. Figs.~\ref{fig:score_dist}(a) and (b) compare the score distributions of ReSIDe and the baseline using ML as the underlying logit-based CSF, evaluated for ResNet50 in the case of Blur. As shown, wrongly classified samples generally get higher scores than correctly classified ones in the baseline, causing the pathological SC behavior identified in Fig.~\ref{fig:rc}; while ReSIDe addresses the issue by assigning generally higher scores to correctly classified samples. A similar analysis is done for Swin-T in Figs.~\ref{fig:score_dist}(c) and (d), with SM as the underlying logit-based CSF, also evaluated in the case of Blur. Again, ReSIDe successfully flips the ordering between correctly classified samples and wrongly classified ones, on average giving the former higher scores than the latter. This analysis explains why ReSIDe substantially improves the SC performance of the baseline.

More analysis on the \textbf{empirical convergence behavior of our preference optimization algorithm} and the \textbf{computational complexity} are shown in Appendix \ref{app:converge} and \ref{app:complex}, respectively.

\section{Conclusion} \label{sec:con_lim}
The paper identifies a pathological SC behavior of logit-based CSFs when applied to SIDs under common covariate shifts. To fix this issue, ReSIDe is proposed to generalize the use of logit-based CSFs to SID's intermediate layers. Using a linear combination of layerwise confidence scores as the final confidence estimate, ReSIDe can significantly improve the SC performance of logit-based CSFs. Comprehensive experimental results show that ReSIDe is effective over all kinds of covariate shifts, logit-based CSFs and SID models.

\textbf{Limitations:} (1) Due to the use of intermediate layerwise logits, ReSIDe is not applicable to non-DNN-based SIDs; however, the aggregation component of our method is applicable whenever multiple confidence scores can be extracted from the SID. (2) The performance of the preference optimization algorithm depends on the hold-out validation set. Thus, practitioners need to make sure the validation distribution is sufficiently varied as to not overfit to any particular covariate shift. (3) Lastly, the creation of the feature centroids requires access to samples from the training distribution. When this is not available, one would need to sample instead from an approximation to this distribution, which could introduce additional approximation errors into the method.



\bibliographystyle{plainnat}
\bibliography{references}


\appendix

\section{Related work} \label{app:rw}
\textbf{Synthetic image detection.}
With the rapid progress of modern generative models such as GANs and diffusion models, synthetic image detection has become increasingly important to prevent the misuse of realistic synthetic images. The goal of this task is to build a synthetic image detector (SID) that distinguishes synthetic images from real ones. I.e., given an input image, an SID is a binary classifier that predicts either ``real'' or ``fake''. Although some works show promising results for SIDs based on hand-crafted features \citep{frank2020leveraging, cozzolino2024zero, ricker2024aeroblade}, modern SIDs are predominately based on DNNs trained by supervised learning \citep{rossler2019faceforensics++, wang2020cnn, yan2023ucf, sha2023fake, koutlis2024leveraging, maier2024reliable}, with many demonstrating near-perfect in-distribution (ID) detection performance. However, recent studies show that these DNN-based SIDs often fail to generalize beyond their training distribution \citep{yumlembam2025detection, maier2024reliable, nguyen2025forensic}, making their reliability questionable in actual deployment. In real-world scenarios, inputs may undergo common covariate shifts such as blurring, JPEG compression, low resolution, adversarial perturbations, images generated by unseen generative models, or images containing unseen semantics. Under these conditions, the performance of SIDs can degrade significantly. To tackle this, recent work proposes to train SIDs with a reject option \citep{maier2024reliable}; however, this approach requires retraining the SID, arguably less practical and efficient than post-hoc methods that don't modify the weights of existing DNN-based SIDs. \cite{yumlembam2025detection} address the issue by out-of-distribution (OOD) detection, which is a weak form of selective classification (SC) under distribution shifts\footnote{OOD detection aims to enhance the reliability of a classifier by rejecting OOD samples. However, OOD samples may be correctly classified, while ID samples may be wrongly classified. Thus, if our goal is to achieve reliable predictions by rejecting samples which are likely to be wrongly classified, we should directly adopt the SC framework, instead of relying on OOD detection with a misaligned objective.} \citep{liang2024selective}. Moreover, most existing works only focus on limited types of covariate shifts such as unseen generators and JPEG compression \citep{maier2024reliable, yumlembam2025detection, nguyen2025forensic}. In contrast, this paper proposes a post-hoc SC framework to address SID's reliability issue, considering six types of covariate shifts, thus being practical, efficient, general, and comprehensive.

\textbf{Selective classification.} Selective classification augments a standard classifier with a rejection option, allowing the model to abstain from making predictions on inputs it is uncertain about. Instead of forcing a prediction for every input, the model produces both a prediction and a confidence score given an input, and outputs the prediction only if the confidence exceeds a predefined threshold. This cautious and conservative prediction framework is often desirable in safety-critical scenarios such as medical diagnostics, autonomous driving, and the justice system \citep{zou2023review, neumann2018relaxed, berk2021fairness}, where any misclassification could lead to severe consequences. Existing SC approaches can be broadly categorized into \textit{training-based} methods and \textit{post-hoc} methods. Training-based approaches \citep{geifman2019selectivenet, huang2020self, liu2019deep, zhou2024novel} consider the classifier as part of the learning problem to optimize for the SC performance, while post-hoc methods \citep{franc2023optimal, liang2024selective} assume that the classifier is pretrained and fixed, thus focusing exclusively on confidence estimation based on a fixed model. Since retraining the classifier could be costly or infeasible in practice, post-hoc SC methods are particularly attractive in real-world deployment scenarios, thus being the primary focus of this paper.

\textbf{Confidence estimation}. Towards better confidence estimation in SC and other applications, various CSFs have been proposed in the literature. The first line of work focuses on ensemble methods \citep{lakshminarayanan2017simple, teye2018bayesian, liu2023spectral}, with Monte-Carlo dropout \citep{gal2016dropout} being a prominent example. However, ensemble methods necessitate multiple forward passes to provide a confidence estimate, which is considered inefficient in practice. Also, recent work suggests that ensemble methods may not contribute much to confidence estimation, but to building a better performing predictor \citep{abe2022deep, xia2022usefulness}. Some other works propose to train auxiliary DNN models to serve as CSFs \citep{shen2023post, corbiere2021confidence}. However, such DNNs can be as large as the base classifier in the SC system, making their training inefficient. Therefore, we shift our focus to another line of work on logit-based CSFs, which are arguably the most efficient class of CSFs. Logit-based CSFs compute confidence scores using classifier's output logits. For example, since DNNs routinely adopt softmax over logits to provide a probabilistic output, the maximum softmax probability (MSP) \citep{hendrycks2016baseline}, also known as the softmax response \citep{geifman2017selective}, is naturally used as a logit-based CSF. Other logit-based CSFs include softmax margin (SM) \citep{belghazi2021classifiers}, the max logit (ML) \citep{basart2022scaling}, the logits margin (LM) \citep{le1990handwritten, liang2024selective}, the negative entropy (NE) \citep{belghazi2021classifiers}, and the negative Gini index (NGI) \citep{granese2021doctor}. In this paper, we evaluate the SC performance of these logit-based CSFs in the context of synthetic image detection under distribution shifts, and propose an approach to improve their performance.

\section{Reweighting for dataset mixture} \label{app:reweight}
Let the test set $T$ be the union of $H$ disjoint subsets, i.e., $T=T_1 \cup T_2 \cup \dots \cup T_H$, where $|T| = N$ and $|T_m| = N_m$ for $m \in [H]$. We reweight samples such that each subset takes up a total mass of $1/H$, ensuring an even mixture. Concretely, the empirical coverage, the empirical selective risk, and AURC are computed by
$$\hat{\phi}_t(f, g, T) = \frac{1}{H} \sum_{m=1}^{H} \sum_{i=1}^{N_m} \mathbbm{1}[g(x_{m,i}) \geq t] / N_m,$$
$$\hat{R}_t(f, g, T) = \frac{\frac{1}{H} \sum_{m=1}^{H} \sum_{i=1}^{N_m} l(f(x_{m,i}), y_{m,i}) \mathbbm{1}[g(x_{m,i}) \geq t] / N_m}{\hat{\phi}_t(f, g, T)},$$
and
$$\text{AURC}(f, g, T) = \frac{1}{H} \sum_{m=1}^{H} \sum_{i=1}^{N_m} \hat{R}_{g(x_{m,i})}(f, g, T)/ N_m,$$
respectively, where $(x_{m,i},y_{m,i})$ denotes the \textit{i}-th sample from the \textit{m}-th subset. It can be seen that if a sample belongs to the \textit{m}-th subset, its mass is $1/HN_m$.

In practice, our implementation of RC curve plotting and AURC computation follows some packages, e.g., \textit{torch-uncertainty}\footnote{https://github.com/ENSTA-U2IS-AI/
torch-uncertainty}, where we sort samples based on their confidence scores and perform (1) a cumulative sum over sample masses to get all coverage levels, (2) a cumulative sum of weighted errors, i.e., 0/1 losses weighted by corresponding sample masses, divided by corresponding coverages, to get error rates at all coverage levels, and (3) a weighted sum or error rates to get AURC. A PyTorch-style pseudo-code is provided in Algorithm \ref{alg:reweight}.

\begin{algorithm}[h!]
\caption{PyTorch-style pseudo-code for reweighted AURC computation.}
\label{alg:reweight}
\footnotesize
\begin{alltt}
\color{ForestGreen}
# confidence: a list of N confidence scores for all samples
# error_flag: a list of N 0/1 losses for all samples
# mass: a list of N masses for all samples
\end{alltt}
\begin{alltt}
\textcolor{ForestGreen}{# Sort samples by confidence (descending)}
sorted_conf, indices = torch.sort(confidence, descending=True)
sorted_error = error_flag[indices]
sorted_mass = mass[indices]

\textcolor{ForestGreen}{# Cumulative statistics}
cum_error = torch.cumsum(sorted_error, dim=0)
coverage = torch.cumsum(sorted_mass, dim=0)

\textcolor{ForestGreen}{# Selective risks at all coverages}
risk = cum_error / coverage

\textcolor{ForestGreen}{# Reweighted AURC}
AURC = (risk * sorted_mass).sum()
\end{alltt}
\end{algorithm}

\section{Proof of Theorem \ref{thm:bound}} \label{app:proof_thm1}

\begin{proof}
Define the ranking error as $L_{rank}(g_{\bm{w}}, V) = \frac{1}{(M-n)n} \sum_{u^+ \in U^+} \sum_{u^- \in U^-} \mathbbm{1}[g_{\bm{w}}(u^-) \ge g_{\bm{w}}(u^+)]$. Since $\mathbbm{1}[q \ge 0] \le \frac{\ln(1+e^q)}{\ln 2}$ for any $q \in \mathbb{R}$, we have
\begin{equation}
    L_{rank}(g_{\bm{w}}, V) \le \frac{1}{\ln 2} L_{ReSIDe}(g_{\bm{w}}, V). \label{eq:soft_hard}
\end{equation}

Then, we invoke the SELE loss $\Delta_{sele}(g_{\bm{w}}, V) = \frac{1}{M^2} \sum_{u^- \in U^-} \sum_{u \in V} \mathbbm{1}[g_{\bm{w}}(u^-) \ge g_{\bm{w}}(u)]$ from \cite{franc2023optimal}, and decompose it as follows
\begin{align}
    \Delta_{sele}(g_{\bm{w}}, V) &= \frac{1}{M^2} \sum_{u^- \in U^-} \left[ \sum_{u^+ \in U^+} \mathbbm{1}(g_{\bm{w}}(u^-) \ge g_{\bm{w}}(u^+)) + \sum_{\hat{u}^- \in U^-} \mathbbm{1}(g_{\bm{w}}(u^-) \ge g_{\bm{w}}(\hat{u}^-)) \right] \nonumber \\
    &= \frac{(M-n)n}{M^2} L_{rank}(g_{\bm{w}}, V) + \frac{1}{M^2} \sum_{u^- \in U^-}\sum_{\hat{u}^- \in U^-} \mathbbm{1}(g_{\bm{w}}(u^-) \ge g_{\bm{w}}(\hat{u}^-)) \nonumber \\
    &\leq \frac{(M-n)n}{M^2} L_{rank}(g_{\bm{w}}, V) + \frac{n^2}{M^2}, \label{eq:wrong_internal}
\end{align}
where the equality in Eq.~(\ref{eq:wrong_internal}) holds if and only if all wrongly classified samples have the same confidence score, i.e., $g_{\bm{w}}(u^-) = g_{\bm{w}}(\hat{u}^-)$ for any $u^- \in U^-$ and $\hat{u}^- \in U^-$. 

Based on Eq.~(\ref{eq:soft_hard}) and Eq.~(\ref{eq:wrong_internal}), we have
\begin{equation}
    \Delta_{sele}(g_{\bm{w}}, V) \le \frac{(M-n)n}{M^2 \ln 2} L_{ReSIDe}(g_{\bm{w}}, V) + \frac{n^2}{M^2}.
\end{equation}

Further applying Theorem 8 from \cite{franc2023optimal}, which states $\text{AURC}(g_{\bm{w}}, V) < 2\Delta_{sele}(g_{\bm{w}}, V)$, we get
\begin{align}
    \text{AURC}(g_{\bm{w}}, V) < \frac{2(M-n)n}{M^2\ln 2} L_{ReSIDe}(g_{\bm{w}}, V) + \frac{2n^2}{M^2}.
\end{align}

If we assume that each sample in $U^-$ has a unique score, then we can tighten Eq.~(\ref{eq:wrong_internal}) to
\begin{equation}
    \Delta_{sele}(g_{\bm{w}}, V) = \frac{(M-n)n}{M^2} L_{rank}(g_{\bm{w}}, V) + \frac{n(n+1)}{2M^2}, \label{eq:wrong_internal_unique}
\end{equation}
and therefore we have 
\begin{align}
    \text{AURC}(g_{\bm{w}}, V) < \frac{2(M-n)n}{M^2\ln 2} L_{ReSIDe}(g_{\bm{w}}, V) + \frac{n(n+1)}{M^2}.
\end{align}

This completes the proof of Theorem \ref{thm:bound}.
\end{proof}

\section{Implementation details} \label{app:imp}
\textbf{Hold-out validation sets.}
The ID validation set follows the same distribution as the training set, i.e., containing images from the same sources and semantic classes, with in total 1000 images. The remaining six validation sets have various covariate shifts, each containing 1000 images as well:
\begin{itemize}
    \item \textbf{Blur:} images are processed by Gaussian blurring with $\sigma=5$;
    \item \textbf{Unseen semantic classes (Unseen Cls.):} images from 50 semantic classes not presented in training;
    \item \textbf{Low resolution (Low Res.):} images are resized while preserving the aspect ratio, such that the length of the shorter edge is 64;
    \item \textbf{FGSM:} images are perturbed using FGSM, with $\epsilon=8/255$;
    \item \textbf{JPEG:} images are compressed with JPEG with $q=30$;
    \item \textbf{Unseen generators (Unseen Gen.):} synthetic images are generated by GLIDE and VQDM, which are not presented in training.
\end{itemize}

Note that there's a specific reason why we train SIDs using synthetic images from ADM and BigGAN, validate using GLIDE and VQDM, and test using Wukong, Midjourney, Stable Diffusion V1.4, and Stable Diffusion V1.5. In fact, to reflect the realistic scenario, we order image generators chronologically, and use the oldest generators for training SIDs, relatively newer generators for validation, and latest generators for testing. Also, it's worth noting that images in the hold-out validation sets are neither used in the training of SIDs nor included in the test sets.

\textbf{Details of ULP.} For ResNet50, we conduct ULP after each residual block, and therefore $L=16$. Note that we apply global average pooling (GAP) over feature maps to get the feature vector $\bm{h}_l$. Base on silhouette analysis, we get $[K_l]_{l=1}^L=[4,3,3,4,4,4,5,4,4,4,4,3,3,2,2,2]$. It can be seen that the number of centroids generally decreases as the layer gets deeper. In fact, for all three blocks in the last stage of ResNet50, the number of centroids stays at 2, the same as the number of classes in synthetic image detection. This behavior is consistent with our intuition, where image representations are more dispersed at early DNN layers, while become concentrated to 2 clusters at deeper layers for binary classification.

For Swin-T, we conduct ULP after each swin transformer block, and therefore $L=12$. Similarly, GAP is adopted over feature maps to get the feature vector. Base on silhouette analysis, we get $[K_l]_{l=1}^L=[3,2,2,2,3,3,3,2,2,2,2,2]$. A similar trend is exhibited as ResNet50, where earlier layers has generally more centroids than deeper ones. Again, for the 5 deepest blocks in Swin-T, the number of centroids stays at 2, the same as the number of classes in synthetic image detection.

\textbf{The training recipe of the preference optimization algorithm.} To optimize $\bm{w}$ following Eq.~\ref{eq:reside}, we adopt an Adam optimizer with a learning rate of 0.1. The batch size is 16, i.e., $|\mathcal{U}^+|=|\mathcal{U}^-|=16$, and we train for 100 epochs. The training wall-clock time is in the order of units of minutes using an NVIDIA V100 GPU.

Note that we initialize $\bm{w}$ with $\bm{e}_{l^*}$, where $l^*=\argmin_{l\in [L+1]}\text{AURC}(g_{\bm{e}_l},V)$ is the index of the best layer, and $\bm{e}_l \in \mathbb{R}^{L+1}$ denotes a one-hot vector whose \textit{l}-th entry is one. Since the preference optimization algorithm returns the checkpoint of $\bm{w}$ with the lowest validation AURC, this initialization enables an implicit fallback to the single best layer. If combining confidence scores from other layers do not provide AURC reduction compared to the single best layer, the final returned checkpoint will just be the initialization $\bm{e}_{l^*}$, and thus the performance of ReSIDe will be the same as the best layer. In a special case where $l^*=L+1$ and $\bm{e}_{l^*}$ is returned by the preference optimization algorithm, the performance of ReSIDe will be the same as the baseline. This baseline fallback behavior explains why some results of ReSIDe in Tabs.~\ref{tab:aurc_six} and \ref{tab:maxlogit_pnorm} get $0.00\%$ AURC reduction.

\section{Convergence curves} \label{app:converge}
Fig.~\ref{fig:convergence}(a) plots the convergence curves of the ReSIDe loss and AURC on the validation set using ML as the underlying logit-based CSF, evaluated for ResNet50 in the case of Blur. It's shown that as the preference optimization progresses, both the ReSIDe loss and AURC decrease. This indicates that: (1) our proposed preference optimization algorithm based on Eq.~(\ref{eq:reside}) effectively minimizes the ReSIDe loss, and (2) as an upper bound of AURC, the minimization of the ReSIDe loss effectively lead to the minimization of AURC. A similar analysis is done for Swin-T in Fig.~\ref{fig:convergence}(b), with SM as the underlying logit-based CSF, also evaluated in the case of Blur. Again, both the ReSIDe loss and AURC converge successfully. Therefore, this analysis verifies the effectiveness of our preference optimization algorithm.

\begin{figure}[!ht]
\centering
\begin{subfigure}[b]{0.4\textwidth}  
\centering 
\includegraphics[width=\textwidth]{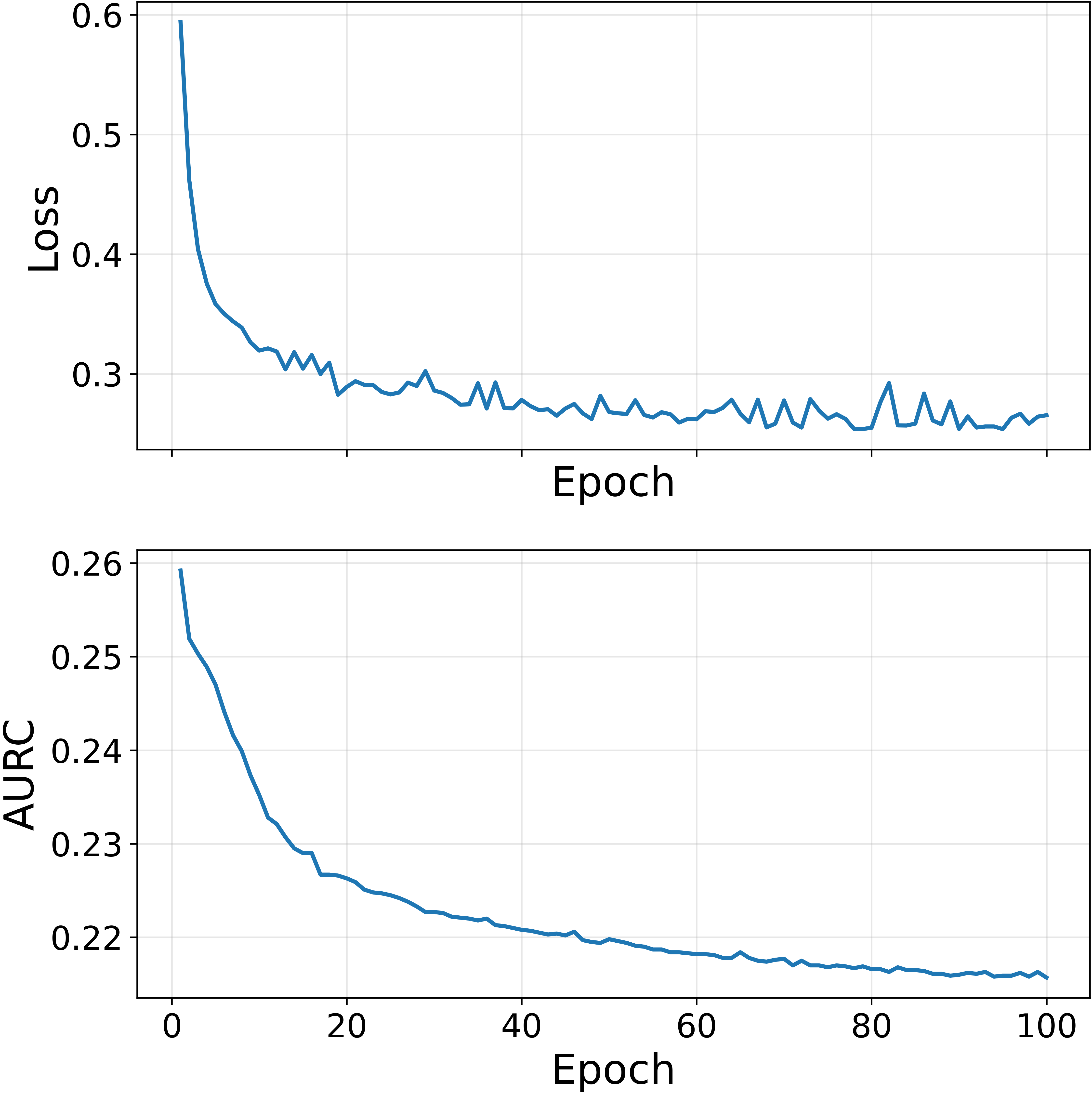}
\caption*{(a) ResNet50}
\end{subfigure}
\begin{subfigure}[b]{0.4\textwidth}
\centering 
\includegraphics[width=\textwidth]{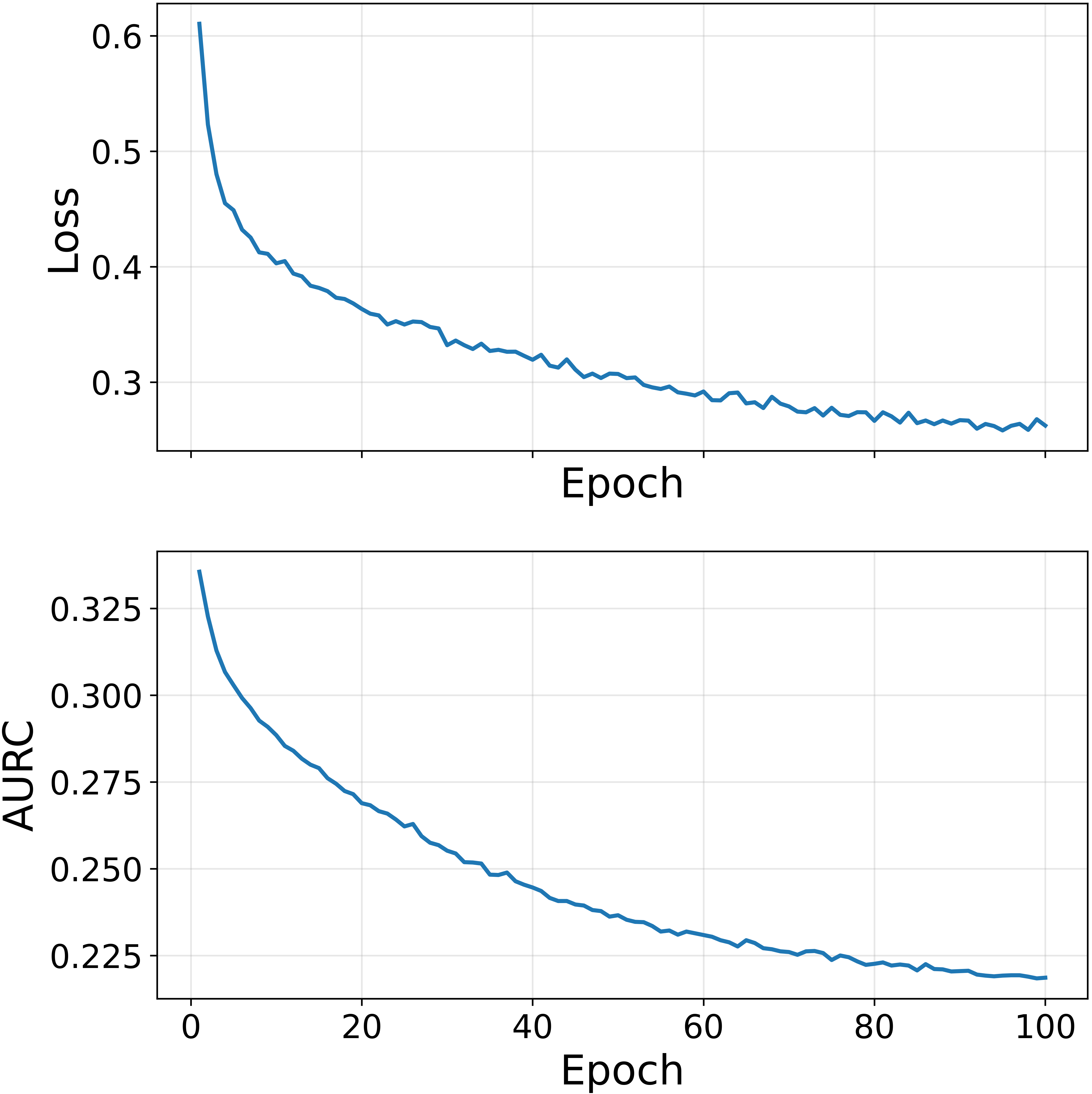}
\caption*{(b) Swin-T}
\end{subfigure}
\caption{Convergence curves of the ReSIDe loss and AURC.} 
\label{fig:convergence}
\end{figure}

\section{Computational complexity} \label{app:complex}
Tab.\ref{tab:comp} shows the inference latency of selective classifiers based on ReSIDe, measured on an NVIDIA L4 GPU. Compared to the baseline, i.e., selective classifiers based on the underlying final logit-based CSF, ReSIDe just slightly increases the latency by approximately 5\% on both models, largely maintaining the efficiency of the SC system.

\begin{table}[!ht]
\centering
\caption{Comparison of inference latency, measured in milliseconds per image (ms/img).}
\label{tab:comp}
\centering\resizebox{0.38\textwidth}{!}{
\begin{tabular}{cccc}
\toprule
Classifier & CSF & Baseline & ReSIDe \\
\midrule
\multirow{6}{*}{ResNet50}
& MSP & 1.996 & 2.091 \\
& SM  & 2.004 & 2.103 \\
& ML  & 1.994 & 2.066 \\
& LM  & 2.008 & 2.085 \\
& NE  & 2.002 & 2.096 \\
& NGI & 2.009 & 2.113 \\
\midrule
\multirow{6}{*}{Swin-T}
& MSP & 2.705 & 2.856 \\
& SM  & 2.730 & 2.907 \\
& ML  & 2.717 & 2.873 \\
& LM  & 2.719 & 2.894 \\
& NE  & 2.704 & 2.886 \\
& NGI & 2.735 & 2.908 \\
\bottomrule
\end{tabular}}
\end{table}


\newpage
\section*{NeurIPS Paper Checklist}

\begin{enumerate}

\item {\bf Claims}
    \item[] Question: Do the main claims made in the abstract and introduction accurately reflect the paper's contributions and scope?
    \item[] Answer: \answerYes{} 
    \item[] Justification: The main claims in the abstract and introduction are supported by the theoretical and experimental results presented in Sections \ref{sec:eval_csf}, \ref{sec:method} and \ref{sec:experiments}.
    \item[] Guidelines:
    \begin{itemize}
        \item The answer \answerNA{} means that the abstract and introduction do not include the claims made in the paper.
        \item The abstract and/or introduction should clearly state the claims made, including the contributions made in the paper and important assumptions and limitations. A \answerNo{} or \answerNA{} answer to this question will not be perceived well by the reviewers. 
        \item The claims made should match theoretical and experimental results, and reflect how much the results can be expected to generalize to other settings. 
        \item It is fine to include aspirational goals as motivation as long as it is clear that these goals are not attained by the paper. 
    \end{itemize}

\item {\bf Limitations}
    \item[] Question: Does the paper discuss the limitations of the work performed by the authors?
    \item[] Answer: \answerYes{} 
    \item[] Justification: The limitations of the work are discussed in Section \ref{sec:con_lim}.
    \item[] Guidelines:
    \begin{itemize}
        \item The answer \answerNA{} means that the paper has no limitation while the answer \answerNo{} means that the paper has limitations, but those are not discussed in the paper. 
        \item The authors are encouraged to create a separate ``Limitations'' section in their paper.
        \item The paper should point out any strong assumptions and how robust the results are to violations of these assumptions (e.g., independence assumptions, noiseless settings, model well-specification, asymptotic approximations only holding locally). The authors should reflect on how these assumptions might be violated in practice and what the implications would be.
        \item The authors should reflect on the scope of the claims made, e.g., if the approach was only tested on a few datasets or with a few runs. In general, empirical results often depend on implicit assumptions, which should be articulated.
        \item The authors should reflect on the factors that influence the performance of the approach. For example, a facial recognition algorithm may perform poorly when image resolution is low or images are taken in low lighting. Or a speech-to-text system might not be used reliably to provide closed captions for online lectures because it fails to handle technical jargon.
        \item The authors should discuss the computational efficiency of the proposed algorithms and how they scale with dataset size.
        \item If applicable, the authors should discuss possible limitations of their approach to address problems of privacy and fairness.
        \item While the authors might fear that complete honesty about limitations might be used by reviewers as grounds for rejection, a worse outcome might be that reviewers discover limitations that aren't acknowledged in the paper. The authors should use their best judgment and recognize that individual actions in favor of transparency play an important role in developing norms that preserve the integrity of the community. Reviewers will be specifically instructed to not penalize honesty concerning limitations.
    \end{itemize}

\item {\bf Theory assumptions and proofs}
    \item[] Question: For each theoretical result, does the paper provide the full set of assumptions and a complete (and correct) proof?
    \item[] Answer: \answerYes{} 
    \item[] Justification: The proof of Theorem 1 is presented in Appendix \ref{app:proof_thm1}, with assumptions clearly stated.
    \item[] Guidelines:
    \begin{itemize}
        \item The answer \answerNA{} means that the paper does not include theoretical results. 
        \item All the theorems, formulas, and proofs in the paper should be numbered and cross-referenced.
        \item All assumptions should be clearly stated or referenced in the statement of any theorems.
        \item The proofs can either appear in the main paper or the supplemental material, but if they appear in the supplemental material, the authors are encouraged to provide a short proof sketch to provide intuition. 
        \item Inversely, any informal proof provided in the core of the paper should be complemented by formal proofs provided in appendix or supplemental material.
        \item Theorems and Lemmas that the proof relies upon should be properly referenced. 
    \end{itemize}

    \item {\bf Experimental result reproducibility}
    \item[] Question: Does the paper fully disclose all the information needed to reproduce the main experimental results of the paper to the extent that it affects the main claims and/or conclusions of the paper (regardless of whether the code and data are provided or not)?
    \item[] Answer: \answerYes{} 
    \item[] Justification: The experimental settings and full details are presented in Sections \ref{sec:eval_csf}, \ref{sec:experiments} and Appendix \ref{app:imp}.
    \item[] Guidelines:
    \begin{itemize}
        \item The answer \answerNA{} means that the paper does not include experiments.
        \item If the paper includes experiments, a \answerNo{} answer to this question will not be perceived well by the reviewers: Making the paper reproducible is important, regardless of whether the code and data are provided or not.
        \item If the contribution is a dataset and\slash or model, the authors should describe the steps taken to make their results reproducible or verifiable. 
        \item Depending on the contribution, reproducibility can be accomplished in various ways. For example, if the contribution is a novel architecture, describing the architecture fully might suffice, or if the contribution is a specific model and empirical evaluation, it may be necessary to either make it possible for others to replicate the model with the same dataset, or provide access to the model. In general. releasing code and data is often one good way to accomplish this, but reproducibility can also be provided via detailed instructions for how to replicate the results, access to a hosted model (e.g., in the case of a large language model), releasing of a model checkpoint, or other means that are appropriate to the research performed.
        \item While NeurIPS does not require releasing code, the conference does require all submissions to provide some reasonable avenue for reproducibility, which may depend on the nature of the contribution. For example
        \begin{enumerate}
            \item If the contribution is primarily a new algorithm, the paper should make it clear how to reproduce that algorithm.
            \item If the contribution is primarily a new model architecture, the paper should describe the architecture clearly and fully.
            \item If the contribution is a new model (e.g., a large language model), then there should either be a way to access this model for reproducing the results or a way to reproduce the model (e.g., with an open-source dataset or instructions for how to construct the dataset).
            \item We recognize that reproducibility may be tricky in some cases, in which case authors are welcome to describe the particular way they provide for reproducibility. In the case of closed-source models, it may be that access to the model is limited in some way (e.g., to registered users), but it should be possible for other researchers to have some path to reproducing or verifying the results.
        \end{enumerate}
    \end{itemize}

\item {\bf Open access to data and code}
    \item[] Question: Does the paper provide open access to the data and code, with sufficient instructions to faithfully reproduce the main experimental results, as described in supplemental material?
    \item[] Answer: \answerNo{} 
    \item[] Justification: The code for ReSIDe will not be released, as it is proprietary; however, complete instructions to reproduce the method are clearly outlined in the paper and appendices. The GenImage dataset is open-source and publicly available, and we have detailed in the paper and appendices how to prepare covariate-shifted test sets and hold-out validation sets based on the GenImage dataset.
    \item[] Guidelines:
    \begin{itemize}
        \item The answer \answerNA{} means that paper does not include experiments requiring code.
        \item Please see the NeurIPS code and data submission guidelines (\url{https://neurips.cc/public/guides/CodeSubmissionPolicy}) for more details.
        \item While we encourage the release of code and data, we understand that this might not be possible, so \answerNo{} is an acceptable answer. Papers cannot be rejected simply for not including code, unless this is central to the contribution (e.g., for a new open-source benchmark).
        \item The instructions should contain the exact command and environment needed to run to reproduce the results. See the NeurIPS code and data submission guidelines (\url{https://neurips.cc/public/guides/CodeSubmissionPolicy}) for more details.
        \item The authors should provide instructions on data access and preparation, including how to access the raw data, preprocessed data, intermediate data, and generated data, etc.
        \item The authors should provide scripts to reproduce all experimental results for the new proposed method and baselines. If only a subset of experiments are reproducible, they should state which ones are omitted from the script and why.
        \item At submission time, to preserve anonymity, the authors should release anonymized versions (if applicable).
        \item Providing as much information as possible in supplemental material (appended to the paper) is recommended, but including URLs to data and code is permitted.
    \end{itemize}

\item {\bf Experimental setting/details}
    \item[] Question: Does the paper specify all the training and test details (e.g., data splits, hyperparameters, how they were chosen, type of optimizer) necessary to understand the results?
    \item[] Answer: \answerYes{} 
    \item[] Justification: Experimental settings/details are presented in Sections \ref{sec:eval_csf}, \ref{sec:experiments} and Appendix \ref{app:imp}.
    \item[] Guidelines:
    \begin{itemize}
        \item The answer \answerNA{} means that the paper does not include experiments.
        \item The experimental setting should be presented in the core of the paper to a level of detail that is necessary to appreciate the results and make sense of them.
        \item The full details can be provided either with the code, in appendix, or as supplemental material.
    \end{itemize}

\item {\bf Experiment statistical significance}
    \item[] Question: Does the paper report error bars suitably and correctly defined or other appropriate information about the statistical significance of the experiments?
    \item[] Answer: \answerYes{} 
    \item[] Justification: The results for ReSIDe are averaged over 3 trials, with standard deviations reported.
    \item[] Guidelines:
    \begin{itemize}
        \item The answer \answerNA{} means that the paper does not include experiments.
        \item The authors should answer \answerYes{} if the results are accompanied by error bars, confidence intervals, or statistical significance tests, at least for the experiments that support the main claims of the paper.
        \item The factors of variability that the error bars are capturing should be clearly stated (for example, train/test split, initialization, random drawing of some parameter, or overall run with given experimental conditions).
        \item The method for calculating the error bars should be explained (closed form formula, call to a library function, bootstrap, etc.)
        \item The assumptions made should be given (e.g., Normally distributed errors).
        \item It should be clear whether the error bar is the standard deviation or the standard error of the mean.
        \item It is OK to report 1-sigma error bars, but one should state it. The authors should preferably report a 2-sigma error bar than state that they have a 96\% CI, if the hypothesis of Normality of errors is not verified.
        \item For asymmetric distributions, the authors should be careful not to show in tables or figures symmetric error bars that would yield results that are out of range (e.g., negative error rates).
        \item If error bars are reported in tables or plots, the authors should explain in the text how they were calculated and reference the corresponding figures or tables in the text.
    \end{itemize}

\item {\bf Experiments compute resources}
    \item[] Question: For each experiment, does the paper provide sufficient information on the computer resources (type of compute workers, memory, time of execution) needed to reproduce the experiments?
    \item[] Answer: \answerYes{} 
    \item[] Justification: Compute resource information is available in Appendix \ref{app:imp} and \ref{app:complex}.
    \item[] Guidelines:
    \begin{itemize}
        \item The answer \answerNA{} means that the paper does not include experiments.
        \item The paper should indicate the type of compute workers CPU or GPU, internal cluster, or cloud provider, including relevant memory and storage.
        \item The paper should provide the amount of compute required for each of the individual experimental runs as well as estimate the total compute. 
        \item The paper should disclose whether the full research project required more compute than the experiments reported in the paper (e.g., preliminary or failed experiments that didn't make it into the paper). 
    \end{itemize}
    
\item {\bf Code of ethics}
    \item[] Question: Does the research conducted in the paper conform, in every respect, with the NeurIPS Code of Ethics \url{https://neurips.cc/public/EthicsGuidelines}?
    \item[] Answer: \answerYes{} 
    \item[] Justification: The research conforms to the NeurIPS Code of Ethics.
    \item[] Guidelines:
    \begin{itemize}
        \item The answer \answerNA{} means that the authors have not reviewed the NeurIPS Code of Ethics.
        \item If the authors answer \answerNo, they should explain the special circumstances that require a deviation from the Code of Ethics.
        \item The authors should make sure to preserve anonymity (e.g., if there is a special consideration due to laws or regulations in their jurisdiction).
    \end{itemize}

\item {\bf Broader impacts}
    \item[] Question: Does the paper discuss both potential positive societal impacts and negative societal impacts of the work performed?
    \item[] Answer: \answerYes{} 
    \item[] Justification: The work has no potential negative societal impacts. Potential positive societal impacts have been discussed in Section \ref{sec:intro}.
    \item[] Guidelines:
    \begin{itemize}
        \item The answer \answerNA{} means that there is no societal impact of the work performed.
        \item If the authors answer \answerNA{} or \answerNo, they should explain why their work has no societal impact or why the paper does not address societal impact.
        \item Examples of negative societal impacts include potential malicious or unintended uses (e.g., disinformation, generating fake profiles, surveillance), fairness considerations (e.g., deployment of technologies that could make decisions that unfairly impact specific groups), privacy considerations, and security considerations.
        \item The conference expects that many papers will be foundational research and not tied to particular applications, let alone deployments. However, if there is a direct path to any negative applications, the authors should point it out. For example, it is legitimate to point out that an improvement in the quality of generative models could be used to generate Deepfakes for disinformation. On the other hand, it is not needed to point out that a generic algorithm for optimizing neural networks could enable people to train models that generate Deepfakes faster.
        \item The authors should consider possible harms that could arise when the technology is being used as intended and functioning correctly, harms that could arise when the technology is being used as intended but gives incorrect results, and harms following from (intentional or unintentional) misuse of the technology.
        \item If there are negative societal impacts, the authors could also discuss possible mitigation strategies (e.g., gated release of models, providing defenses in addition to attacks, mechanisms for monitoring misuse, mechanisms to monitor how a system learns from feedback over time, improving the efficiency and accessibility of ML).
    \end{itemize}
    
\item {\bf Safeguards}
    \item[] Question: Does the paper describe safeguards that have been put in place for responsible release of data or models that have a high risk for misuse (e.g., pre-trained language models, image generators, or scraped datasets)?
    \item[] Answer: \answerNA{} 
    \item[] Justification: We do not release data or models, and thus the paper poses no risks for misuse.
    \item[] Guidelines:
    \begin{itemize}
        \item The answer \answerNA{} means that the paper poses no such risks.
        \item Released models that have a high risk for misuse or dual-use should be released with necessary safeguards to allow for controlled use of the model, for example by requiring that users adhere to usage guidelines or restrictions to access the model or implementing safety filters. 
        \item Datasets that have been scraped from the Internet could pose safety risks. The authors should describe how they avoided releasing unsafe images.
        \item We recognize that providing effective safeguards is challenging, and many papers do not require this, but we encourage authors to take this into account and make a best faith effort.
    \end{itemize}

\item {\bf Licenses for existing assets}
    \item[] Question: Are the creators or original owners of assets (e.g., code, data, models), used in the paper, properly credited and are the license and terms of use explicitly mentioned and properly respected?
    \item[] Answer: \answerYes{} 
    \item[] Justification: We properly credit and cite all used existing assets and adhere to their terms of use.
    \item[] Guidelines:
    \begin{itemize}
        \item The answer \answerNA{} means that the paper does not use existing assets.
        \item The authors should cite the original paper that produced the code package or dataset.
        \item The authors should state which version of the asset is used and, if possible, include a URL.
        \item The name of the license (e.g., CC-BY 4.0) should be included for each asset.
        \item For scraped data from a particular source (e.g., website), the copyright and terms of service of that source should be provided.
        \item If assets are released, the license, copyright information, and terms of use in the package should be provided. For popular datasets, \url{paperswithcode.com/datasets} has curated licenses for some datasets. Their licensing guide can help determine the license of a dataset.
        \item For existing datasets that are re-packaged, both the original license and the license of the derived asset (if it has changed) should be provided.
        \item If this information is not available online, the authors are encouraged to reach out to the asset's creators.
    \end{itemize}

\item {\bf New assets}
    \item[] Question: Are new assets introduced in the paper well documented and is the documentation provided alongside the assets?
    \item[] Answer: \answerNA{} 
    \item[] Justification: There are no released new assets.
    \item[] Guidelines:
    \begin{itemize}
        \item The answer \answerNA{} means that the paper does not release new assets.
        \item Researchers should communicate the details of the dataset\slash code\slash model as part of their submissions via structured templates. This includes details about training, license, limitations, etc. 
        \item The paper should discuss whether and how consent was obtained from people whose asset is used.
        \item At submission time, remember to anonymize your assets (if applicable). You can either create an anonymized URL or include an anonymized zip file.
    \end{itemize}

\item {\bf Crowdsourcing and research with human subjects}
    \item[] Question: For crowdsourcing experiments and research with human subjects, does the paper include the full text of instructions given to participants and screenshots, if applicable, as well as details about compensation (if any)? 
    \item[] Answer: \answerNA{} 
    \item[] Justification: The paper does not involve crowdsourcing nor research with human subjects.
    \item[] Guidelines:
    \begin{itemize}
        \item The answer \answerNA{} means that the paper does not involve crowdsourcing nor research with human subjects.
        \item Including this information in the supplemental material is fine, but if the main contribution of the paper involves human subjects, then as much detail as possible should be included in the main paper. 
        \item According to the NeurIPS Code of Ethics, workers involved in data collection, curation, or other labor should be paid at least the minimum wage in the country of the data collector. 
    \end{itemize}

\item {\bf Institutional review board (IRB) approvals or equivalent for research with human subjects}
    \item[] Question: Does the paper describe potential risks incurred by study participants, whether such risks were disclosed to the subjects, and whether Institutional Review Board (IRB) approvals (or an equivalent approval/review based on the requirements of your country or institution) were obtained?
    \item[] Answer: \answerNA{} 
    \item[] Justification: The paper does not involve crowdsourcing nor research with human subjects.
    \item[] Guidelines:
    \begin{itemize}
        \item The answer \answerNA{} means that the paper does not involve crowdsourcing nor research with human subjects.
        \item Depending on the country in which research is conducted, IRB approval (or equivalent) may be required for any human subjects research. If you obtained IRB approval, you should clearly state this in the paper. 
        \item We recognize that the procedures for this may vary significantly between institutions and locations, and we expect authors to adhere to the NeurIPS Code of Ethics and the guidelines for their institution. 
        \item For initial submissions, do not include any information that would break anonymity (if applicable), such as the institution conducting the review.
    \end{itemize}

\item {\bf Declaration of LLM usage}
    \item[] Question: Does the paper describe the usage of LLMs if it is an important, original, or non-standard component of the core methods in this research? Note that if the LLM is used only for writing, editing, or formatting purposes and does \emph{not} impact the core methodology, scientific rigor, or originality of the research, declaration is not required.
    \item[] Answer: \answerNA{} 
    \item[] Justification: The core method development in this research does not involve LLMs as any important, original, or non-standard components.
    \item[] Guidelines:
    \begin{itemize}
        \item The answer \answerNA{} means that the core method development in this research does not involve LLMs as any important, original, or non-standard components.
        \item Please refer to our LLM policy in the NeurIPS handbook for what should or should not be described.
    \end{itemize}

\end{enumerate}

\end{document}